\newenvironment{proof}{\textit{Proof. }}{\hfill$\square$}
\newcommand{\specificthanks}[1]{\@fnsymbol{#1}}
\newtheorem{theorem}{Theorem}
\newtheorem{lemma}{Lemma}
\newtheorem{corollary}{Corollary}
\DeclareMathOperator*{\argmax}{arg\,max}
\title{Distributional Reinforcement Learning for Multi-Dimensional Reward Functions}
\author{%
  Pushi Zhang\thanks{Equal contribution.}\textsuperscript{\ \ }\thanks{Work done during an internship at Microsoft Research Asia. } \\
  Tsinghua University\\
  \texttt{zpschang@gmail.com} \\
  \And
   Xiaoyu Chen\footnotemark[1]\textsuperscript{\ \ }\footnotemark[2] \\
   Tsinghua University \\
   \texttt{chen-xy21@mails.tsinghua.edu.cn} \\
   \And
   Li Zhao\thanks{Corresponding author.} \\
   Microsoft Research Asia \\
   \texttt{lizo@microsoft.com} \\
   \AND
   Wei Xiong\footnotemark[2] \\
   The Hong Kong University \\ of Science and Technology  \\
   \texttt{wxiongae@connect.ust.hk} \\
   \And
   Tao Qin \\
   Microsoft Research Asia \\
   \texttt{taoqin@microsoft.com} \\
   \And
   Tie-Yan Liu \\
   Microsoft Research Asia \\
   \texttt{tyliu@microsoft.com} \\
}
\begin{document}

\maketitle

\begin{abstract}
  A growing trend for value-based reinforcement learning~(RL) algorithms is to capture more information than scalar value functions in the value network. One of the most well-known methods in this branch is distributional RL, which models return distribution instead of scalar value. In another line of work, hybrid reward architectures~(HRA) in RL have studied to model source-specific value functions for each source of reward, which is also shown to be beneficial in performance. To fully inherit the benefits of distributional RL and hybrid reward architectures, we introduce Multi-Dimensional Distributional DQN~(MD3QN), which extends distributional RL to model the joint return distribution from multiple reward sources. As a by-product of joint distribution modeling, MD3QN can capture not only the randomness in returns for each source of reward, but also the rich reward correlation between the randomness of different sources. We prove the convergence for the joint distributional Bellman operator and build our empirical algorithm by minimizing the Maximum Mean Discrepancy between joint return distribution and its Bellman target. In experiments, our method accurately models the joint return distribution in environments with richly correlated reward functions, and outperforms previous RL methods utilizing multi-dimensional reward functions in the control setting.
\end{abstract}

\section{Introduction}


Making value network capture more information than scalar value functions is a growing trend in value-based reinforcement learning, which helps the agent gain more knowledge about the environment and has great potentials to improve the sample efficiency 
of RL agents. 
In the early stage 
of deep reinforcement learning, DQN~\citep{mnih2013playing} uses the scalar output of the neural network to represent value functions. As value-based RL algorithms evolve, distributional RL algorithms start to use neural networks to approximate return distributions for each state-action pair, and take action based on the expectation of return distributions. 
By capturing the randomness in return as auxiliary tasks, distributional RL agents can gain more knowledge about the environment and learn better representations to avoid state aliasing~\citep{bellemare2017distributional}.
Distributional RL algorithms including C51~\citep{bellemare2017distributional}, QR-DQN~\citep{dabney2018distributional}, IQN~\citep{dabney2018implicit},  FQF~\citep{FQF} and MMDQN~\citep{nguyen2020distributional}  achieve substantial performance gain compared to DQN. 


In another line of work, HRA~\citep{van2017hybrid} and RD$^2$~\citep{lin2020rd} consider the setting where multiple sources of reward exist in the environment and modify the value network to model source-specific value function for each source of reward. In these works, the outputs of value networks can be interpreted as multiple source-specific value functions, and the agent takes action based on the sum of all source-specific value functions. Similar to return distribution, estimating the source-specific value functions can be seen as auxiliary tasks, which serve as additional supervision for how the total reward is composed and enable the agent to learn better representations. 
Several previous works support that it is beneficial to model source-specific value functions~\citep{boutilier1995exploiting, sutton2011horde, van2017hybrid, lin2020rd}.



Towards providing more supervision signals and enabling agents to gain more knowledge about the environment, we propose to capture the correlated randomness in source-specific returns. Specifically, we consider the source-specific returns from all sources of rewards as a multi-dimensional random variable, and capture its joint distribution to model the randomness of returns from different sources. This provides an informative learning target for our agent. 
The framework is general and can be extended to capture the correlated randomness of other types of random variables than rewards. For example, we can capture the correlation between achieving different goals in goal-conditioned reinforcement learning~\citep{schaul2015universal}, or visiting different states in successor representation~\citep{kulkarni2016deep}. In this paper, we focus on the method for learning joint return distribution of given source-specific rewards and leave the extension to more general settings for future work. 


Following existing works on distributional RL, we study the convergence of the Bellman operator and propose an empirical algorithm to approximate the Bellman operator. First, we define the joint distributional Bellman operator, and prove its convergence under the Wasserstein metric. 
To derive an empirical algorithm, our proposed method (MD3QN) approximates the joint distributional Bellman operator by minimizing the Maximum Mean Discrepancy~(MMD) loss over joint return distributions and its Bellman target. MMD holds desirable properties that, it is a metric over joint distribution and its square can be unbiasedly optimized with batch samples. This enables our algorithm to approximate the Bellman operator accurately when the number of samples goes to infinity and the loss is minimized to zero. 
In experiments on Atari games and other environments with pixel inputs, our method accurately models the multi-dimensional joint distribution from multiple sources of reward. 
Moreover, our algorithm outperforms previous work HRA which also separately models multiple sources of reward on Atari game environments. 

Our contributions can be summarized as follows: 
\begin{itemize}
    \item We propose a distributional RL algorithm, MD3QN, that extends distributional RL algorithms to model the joint return distribution from multiple sources of reward.
    \item We establish convergence results for the joint distributional Bellman operator, and our proposed algorithm MD3QN approximates this Bellman operator by minimizing MMD loss over joint return distribution and its Bellman target. 
    \item Empirically, our method outperforms previous RL algorithms utilizing multiple sources of reward, and accurately models the joint return distribution from all sources of rewards. 
\end{itemize}


\section{Background}

\subsection{Notations and Problem Setting}

Since our work considers multiple sources of reward exist in the environment, our problem setting 
is slightly different from the traditional RL setting in reward function. We consider a Markov Decision Process with multiple sources of reward defined by $(\mathcal{S},\mathcal{A},P,\bm{R},\rho_0,\gamma)$, where $\mathcal{S}$ is the set of states, $\mathcal{A}$ is the finite set of actions, $P: \mathcal{S} \times \mathcal{A} \rightarrow \mathcal{P}(\mathcal{S})$ denotes the transition probability, $\bm{R} : \mathcal{S} \times \mathcal{A} \rightarrow \mathcal{P}(\mathbb{R}^N)$ denotes the reward function for a total of $N$ sources of reward, $\rho_0 : \mathcal{P}(\mathcal{S})$ denotes the distribution of initial state $S_0$, $\gamma \in (0,1)$ is the discount factor. Given a policy $\pi: \mathcal{S} \rightarrow \mathcal{P}(\mathcal{A})$, a trajectory is generated by $s_0 \sim \rho_0$, $a_t \sim \pi(s_t)$, $s_{t+1} \sim P(s_t, a_t)$ and $\bm{r}_{t} = [r_{t, 1}, r_{t, 2},..., r_{t, N}]^\top \sim \bm{R}(s_t, a_t)$. 
We use $r_t = \sum_{n=1}^{N}r_{t, n}$ to denote the total reward received at time $t$. 
The goal for reinforcement learning algorithms is to find the optimal policy $\pi$ which maximizes the expected total return from all sources, given by $J(\pi) = \mathbb{E}_\pi[\sum_{t=0}^{\infty}\gamma^t \sum_{n=1}^N r_{t,n}]$. 

Next we describe value-based reinforcement learning algorithms in a general framework. In DQN, the value network $Q(s,a;\theta)$ captures the scalar value function, where $\theta$ is the parameters of the value network. $\theta_0$ is the initial value of $\theta$. In $i$-th iteration, the training objective for $\theta_i$ is\footnote{In practical implementations, the iterative process is approximated by substituting $\theta_{i-1}$ by the parameter of the target network which tracks the exponential moving average of the online network parameters $\theta_{i}$. }
\begin{align}
    L(\theta_i) &= \mathbb{E}_{s_t,a_t,r_t,s_{s+1}} [(Q(s_t,a_t;\theta_i) - y_i)^2], \\
    \text{ where } y_i &= r_t + \gamma \max_{a'} Q(s_{t+1}, a';\theta_{i-1}),
\end{align}
and $s_t,a_t,r_t,s_{t+1}$ are sampled from the replay memory. 

\subsection{Distributional RL}

In distributional RL algorithms such as C51, IQN and MMD-DQN, 
$\mu(s,a;\theta)$ captures the return distribution for each state action pair $(s, a)$. In $i$-th iteration, the training objective for $\theta_i$ is
\begin{align}
    L(\theta_i) &= \mathbb{E}_{s_t,a_t,\bm{r}_t,s_{t+1}} [d(\mu(s_t,a_t;\theta_i), \eta_i)], \\
    \text{ where } \eta_i &= \left(f_{r_t,\gamma}\right)_{\#} \mu(s_{t+1}, a';\theta_{i-1}), a' = \argmax_{a}\mathbb{E}_{Z \sim \mu(s_{t+1}, a;\theta_{i-1})}[Z].
\end{align}

Here $f_{r,\gamma}(x):=r + \gamma \cdot x$. Given a probability distribution $\nu \in \mathcal{P}(\mathbb{R})$, $f_{\#}\nu \in \mathcal{P}(\mathbb{R})$ is the pushforward measure defined by $f_{\#}\nu(A)=\nu(f^{-1}(A))$ for all Borel sets $A\subseteq \mathbb{R}$ and a measurable function $f:\mathbb{R} \rightarrow \mathbb{R}$ \citep{rowland2018analysis}.
$d$ is some distributional metric different in each method. In C51 $d$ is KL divergence, in MMDQN $d$ is Maximum Mean Discrepancy, while in QR-DQN and IQN $d$ is Wasserstein metric, which is optimized approximately via quantile regression loss. 

\subsection{Hybrid Reward Architecture}
HRA \citep{van2017hybrid} proposes the hybrid architecture to separately model the value function for each source of reward. With multiple sources of reward, 
we use $\bm{Q}: (\mathbb{R}^N)^{\mathcal{S} \times \mathcal{A}}$ and $Q_n: \mathbb{R}^{\mathcal{S} \times \mathcal{A}}$ to denote the vectored value function and the $n$-th value function respectively. 

The loss used by HRA is given by: 
\begin{align}
    L(\theta_i) &= \mathbb{E}_{s_t,a_t,\bm{r}_t,s_{t+1}} [\|\bm{Q}(s_t,a_t;\theta_i)- \bm{y}_i\|_2^2], \\
    \text{ where } y_{i,n} &= r_{t,n} + \gamma Q_n(s_{t+1}, a';\theta_{i-1}), a' = \argmax_{a}Q_n(s_{t+1}, a;\theta_{i-1}).
\end{align}

RD$^2$ \citep{lin2020rd} uses the similar hybrid architecture, which also learns the value function separately for each source of reward, but with slight differences with HRA in how the next action $a'$ is computed. The loss used by RD$^2$ is given by\footnote{In equation \eqref{eq-8}, $\bm{y}_i$ is an N-dimensional vector. We use the bold font to denote that the variable is for the multi-dimensional reward function. }
\begin{align}
    L(\theta_i) &= \mathbb{E}_{s_t,a_t,\bm{r}_t,s_{t+1}} [\|\bm{Q}(s_t,a_t;\theta_i)- \bm{y}_i\|_2^2], \\
    \text{ where } \bm{y}_i &= \bm{r} + \gamma \bm{Q}(s_{t+1}, a';\theta_{i-1}), a' = \argmax_{a}\sum_{n=1}^{N}Q_n(s_{t+1}, a;\theta_{i-1}). \label{eq-8}
\end{align}

\section{Distributional Reinforcement Learning for Multi-Dimensional Reward Functions}

In this paper, we propose to capture the correlated randomness from multiple sources of reward, forcing the agent to gain more knowledge about the environment and learn better representations. Specifically, we consider the joint distribution of returns from different sources of reward. 
First, we introduce the joint distributional Bellman operator and establish its convergence. 
Second, we introduce an empirical algorithm, which approximates the joint distributional Bellman operator through stochastic optimization with deep neural networks. Inspired by MMDQN~\citep{nguyen2020distributional}, we introduce a temporal difference loss based on Maximum Mean Discrepancy over joint return distributions.
Finally, we present full implementation details of our method, including network architectures and algorithms. 

\subsection{Definitions and Settings}

We consider the joint return under policy $\pi$ as a random vector $\bm{Z}^\pi(s,a)$ composed of $N$ random variables $\bm{Z}^\pi(s,a) = \left( Z^\pi_1(s,a), \cdots, Z^\pi_N(s,a) \right)^\top$:
\begin{align}
    \bm{Z}^\pi(s,a) &= \sum_{t=0}^\infty \gamma^t \bm{r}_t, \\
    \text{ where } s_0 &= s, a_0 = a, \bm{r}_t \sim R(\cdot|s_t, a_t), s_{t+1} \sim P(\cdot|s_t, a_t), a_{t+1} \sim \pi(\cdot|s_{t+1}).
\end{align}

Here $Z^\pi_i(s,a)$ denotes the $i$-th dimension of $\bm{Z}^\pi(s,a)$, which is the random variable of the $i$-th source of discounted return. Here the random variables of discounted return in different sources can be correlated. 
We denote the distribution of $\bm{Z}^\pi$ as $\bm{\mu}^{\pi} = \text{Law} \left( \bm{Z}^\pi \right)$, and the distribution of $i$-th random variable $Z^\pi_i$ as $\mu_i^{\pi} = \text{Law} \left( Z^\pi_{i} \right)$, where $\bm{\mu}^{\pi} \in \mathcal{P}(\mathbb{R}^N)^{\mathcal{S} \times \mathcal{A}}$ and $\mu_i^{\pi} \in \mathcal{P}(\mathbb{R})^{\mathcal{S} \times \mathcal{A}}$.
We let $\bm{\mu} \in \mathcal{P}(\mathbb{R}^N)^{\mathcal{S} \times \mathcal{A}}$ to denote arbitrary joint distribution over all state-action pairs,
and the goal of our algorithm is to let the joint distribution $\bm{\mu}$ to be as close to the joint distribution $\bm{\mu}^{\pi}$ as possible in policy evaluation setting, i.e. to model the real joint return distribution. 

To measure how close two $N$-dimensional joint distributions are, we adopt the Wasserstein metric $W_p$.  
For two joint distributions $\nu_1, \nu_2 \in \mathcal{P}(\mathbb{R}^N)$, the p-Wasserstein metric $W_p(\nu_1, \nu_2)$ on Euclidean distance $d$ is given by: 
\begin{equation}
    W_p(\nu_1, \nu_2) = \left(\inf_{f \in \Gamma(\nu_1, \nu_2)} \int_{\mathbb{R}^N \times \mathbb{R}^N} d(x, y)^p f(x,y) d^N x d^N y \right)^{1/p},
\end{equation}
where $\Gamma(\nu_1, \nu_2)$ is the collection of all distributions with marginal distributions $\nu_1$ and $\nu_2$ on the first and second $N$ random variables 
respectively. The Wasserstein distance can be interpreted as the minimum moving distance to move all the mass from distribution $\nu_1$ to distribution $\nu_2$. 

We further define the supremum-$p$-Wasserstein distance on $\mathcal{P}(\mathbb{R}^N)^{\mathcal{S} \times \mathcal{A}}$ by
\begin{equation}
\bar{d}_p(\bm{\mu}_1, \bm{\mu}_2) := \sup_{s,a} W_p(\bm{\mu}_1(s,a), \bm{\mu}_2(s,a)),
\end{equation}
where $\bm{\mu}_1, \bm{\mu}_2 \in \mathcal{P}(\mathbb{R}^N)^{\mathcal{S} \times \mathcal{A}}$.

We will use $\bar{d}_p$ to establish the convergence of the joint distributional Bellman operator. 

\subsection{Convergence of the Joint Distributional Bellman Operator}
We define the joint distributional Bellman evaluation operator $\mathcal{T}^\pi$ as
\begin{align}
    \mathcal{T}^\pi \bm{\mu}(s_t,a_t) &\stackrel{D}{\coloneqq} \int_{\mathcal{S}} \int_{\mathcal{A}} \int_{\mathbb{R} ^ N} (f_{\bm{r}_t, \gamma})_{\#} \bm{\mu}(s_{t+1}, a_{t+1}) R(d\bm{r}_t|s_t,a_t) \pi(da_{t+1}|s_{t+1}) P(ds_{t+1}|s_t,a_t),
\end{align}
where $f_{\bm{r}, \gamma}(\bm{x}) = \bm{r} + \gamma \bm{x}$, where $\bm{x}, \bm{r} \in \mathbb{R}^N$ and $f_{\#} \bm{\mu}$ is the pushforward measure which is a $N$-dimensional extension of the pushforward measure defined in \cite{rowland2018analysis}.
The below theorem shows that $\mathcal{T}^\pi$ is a $\gamma$-contraction operator on $\bar{d}_p$. 
\begin{theorem}
    For two joint distributions $\bm{\mu}_1$ and $\bm{\mu}_2$, 
    we have
    \begin{equation}
        \bar{d}_p(\mathcal{T}^\pi \bm{\mu}_1, \mathcal{T}^\pi \bm{\mu}_2) \leq \gamma \bar{d}_p(\bm{\mu}_1, \bm{\mu}_2).
    \end{equation}
\end{theorem}

The proof for Theorem 1 is provided in Appendix A.1 which can be briefly described as follows. In Lemma 3, we prove that if the Wasserstein distance of two joint distributions is $C$, then after applying the same linear transformation to these two distributions with a scale factor of $\gamma$, the Wasserstein distance is at most $\gamma C$. In Lemma 4, we prove that the Wasserstein distance of two joint distributions under the same conditional random variable $A$ is no greater than the maximum Wasserstein distance over all possible conditions $A=a$. By Lemma 3 and Lemma 4, we are able to prove the contraction results in Theorem 1. 

Following the result of Theorem 1, we consider the following scenario: initially we have a joint distribution $\bm{\mu}_0 \in \mathcal{P}(\mathbb{R}^N)^{\mathcal{S} \times \mathcal{A}}$ over all state-action pairs, and by iteratively applying the Bellman evaluation operator, we get $\bm{\mu}_{i+1} = \mathcal{T}^\pi \bm{\mu}_i $.
According to Banach's fixed point theorem, operator $\mathcal{T}^\pi$ has a unique fixed point, which is $\bm{\mu}^\pi$ by definition. 
The distance between $\bm{\mu}_i$ and $\bm{\mu}^\pi$ decays as $i$ increases. We have the following corollary:
\begin{corollary}
    If $\bm{\mu}_{i+1} = \mathcal{T}^\pi \bm{\mu}_i $, then as $i \rightarrow \infty$, $\bm{\mu}_i \rightarrow \bm{\mu}^\pi$. 
\end{corollary}
We also provide contraction proof on the expectation of the optimality operator.
The joint distributional Bellman optimality operator $\mathcal{T}$ is defined as
\begin{align}
    \mathcal{T}\bm{\mu}(s_t,a_t) &\stackrel{D}{\coloneqq} \int_{\mathcal{S}} \int_{\mathbb{R} ^ N} (f_{\bm{r}_t,\gamma})_{\#} \bm{\mu}(s_{t+1}, a') R(d\bm{r}_t|s_t,a_t) P(ds_{t+1}|s_t,a_t), \label{eq-optimal-op} \\
    \text{ where } & a' \in \argmax_{a} \mathbb{E}_{\bm{Z} \sim \bm{\mu}(s_{t+1}, a)} \sum_{n=1}^N Z_n \text{ and } Z_n \text{ is the  }n\text{-th element in } \bm{Z}. \label{eq-optimal-op-where}
\end{align}

\begin{theorem}
    For two joint distributions $\bm{\mu}_1$ and $\bm{\mu}_2$, 
    we have
    \begin{equation}
        \vert \vert (\mathbb{E}_{\sum}) (\mathcal{T} \bm{\mu}_1) - (\mathbb{E}_{\sum}) (\mathcal{T} \bm{\mu}_2) \vert \vert_{\infty} \leq \gamma \vert \vert (\mathbb{E}_{\sum})\bm{\mu}_{1} - (\mathbb{E}_{\sum})\bm{\mu}_{2} \vert \vert_{\infty},
    \end{equation}
\end{theorem}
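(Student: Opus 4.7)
The plan is to reduce Theorem 2 to the classical contraction of the scalar Bellman optimality operator. Define the induced scalar action-value function $V_i(s,a) := (\mathbb{E}_\Sigma)\bm{\mu}_i(s,a) = \mathbb{E}_{\bm{Z}\sim\bm{\mu}_i(s,a)}\big[\sum_{n=1}^N Z_n\big]$ for $i=1,2$, and the scalar expected reward $\bar r(s,a) := \mathbb{E}_{\bm{r}\sim R(\cdot|s,a)}\big[\sum_{n=1}^N r_n\big]$. The key observation is that the nonlinear, distribution-valued operator $\mathcal{T}$ becomes an ordinary scalar optimality operator once we apply $\mathbb{E}_\Sigma$, because $\mathbb{E}_\Sigma$ is a \emph{linear} functional on $\mathcal{P}(\mathbb{R}^N)$ and the pushforward $f_{\bm r,\gamma}$ is affine in its argument.

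First I would unfold the definition. Since the greedy action $a' = \arg\max_a V_i(s_{t+1},a)$ in \eqref{eq-optimal-op-where} depends only on $s_{t+1}$ and $\bm{\mu}_i$ (not on $\bm{r}_t$), and since $\mathbb{E}_\Sigma \circ (f_{\bm{r},\gamma})_\# = \sum_n r_n + \gamma\,\mathbb{E}_\Sigma$ by linearity of expectation, integrating \eqref{eq-optimal-op} against $R$ and $P$ yields the clean identity
\begin{equation}
(\mathbb{E}_\Sigma)(\mathcal{T}\bm{\mu}_i)(s,a) \;=\; \bar r(s,a) + \gamma\,\mathbb{E}_{s'\sim P(\cdot|s,a)}\Big[\max_{a'} V_i(s',a')\Big].
\end{equation}
Thus $(\mathbb{E}_\Sigma)\circ \mathcal{T}$ coincides with the standard scalar Bellman optimality operator $\mathcal{B}$ acting on $V_i$.

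With this identification in hand, the contraction proof is the textbook one: subtract, bound $|\max_{a'}V_1(s',a')-\max_{a'}V_2(s',a')|\le \max_{a'}|V_1(s',a')-V_2(s',a')|\le \|V_1-V_2\|_\infty$, pull this through the expectation over $s'$, and then take a supremum over $(s,a)$ on the left. This gives
\begin{equation}
\big\|(\mathbb{E}_\Sigma)(\mathcal{T}\bm{\mu}_1)-(\mathbb{E}_\Sigma)(\mathcal{T}\bm{\mu}_2)\big\|_\infty \;\le\; \gamma\,\|V_1-V_2\|_\infty \;=\; \gamma\,\big\|(\mathbb{E}_\Sigma)\bm{\mu}_1-(\mathbb{E}_\Sigma)\bm{\mu}_2\big\|_\infty,
\end{equation}
which is exactly the claim.

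There is no real obstacle; the only care needed is the first step, namely verifying that applying $\mathbb{E}_\Sigma$ commutes cleanly with the integrals and the pushforward in \eqref{eq-optimal-op}, and checking that the shared greedy selector does not spoil the standard $\max$-contraction argument (it does not, because each $a'$ is chosen optimally with respect to its own $V_i$, and the usual one-sided bound $\max_a f(a) - \max_a g(a) \le \max_a(f(a)-g(a))$ handles both sides symmetrically). Everything else is the classical argument for scalar $Q$-learning, transplanted to the induced sum-of-components value function.
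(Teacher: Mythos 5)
Your proposal is correct and follows essentially the same route as the paper: both establish the commutation identity $(\mathbb{E}_{\sum})(\mathcal{T}\bm{\mu}) = \mathcal{T}_E(\mathbb{E}_{\sum})\bm{\mu}$ (i.e.\ applying the expected-sum functional turns the joint distributional optimality operator into the standard scalar Bellman optimality operator) and then conclude by the classical $\gamma$-contraction of that scalar operator in the sup-norm. The only difference is that you spell out the $\max$-contraction bound explicitly, while the paper invokes it as known.
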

where the operator $\mathbb{E}_{\sum}$ is defined by $ \left( \mathbb{E}_{\sum} \right) \bm{\mu}(s,a) = \mathbb{E}_{\bm{Z} \sim \bm{\mu}(s,a)} \sum_{n=1}^N Z_n$ for any $(s,a)$, and $Z_n$ is the $n$-th element in $\bm{Z}$. The operator $\mathbb{E}_{\sum}$ converts the joint distribution over all state-action pairs to the corresponding total expected returns over all state-action pairs.  

\begin{corollary}
    If $\bm{\mu}_{i+1} = \mathcal{T} \bm{\mu}_i $, then as $i \rightarrow \infty$, $\mathbb{E}_{\bm{Z} \sim \bm{\mu}_i(s,a)} \sum_{n=1}^N Z_n \rightarrow Q^*(s,a)$ for all $(s,a)$.
\end{corollary}

Corollary 2 can be interpreted as follows: as we iteratively apply the joint distributional Bellman optimality operator, the expected total return of the joint distribution $\bm{\mu}_i$ will converge to optimal value function. We refer the readers to Appendix A.1 for detailed proofs for all the lemmas, theorems, and corollaries.


Next, we establish the empirical algorithm which simulates the iterative process in Corollary 1 and Corollary 2 with a neural network approximating the joint distribution $\bm{\mu}_{i}$ for all state-action pairs. 

\subsection{Optimizing MMD as Approximation for Bellman Operator}


In practical algorithms, the joint distribution $\bm{\mu}_i(s,a)$ is represented by deep neural network $\bm{\mu}(s,a;\theta_{i})$ which outputs joint distribution for each state-action pair with parameter $\theta_i$. In the $(i+1)$-th iteration, we can only adapt the parameter $\theta_{i+1}$ of the model, and cannot directly apply the Bellman operator as in the tabular case. Moreover, we only have sampled transitions, rather the environment dynamics model $P$. It is desirable to choose a loss that is compatible with stochastic optimization, as an approximation for the Bellman operator. 


We achieve this by minimizing the Maximum Mean Discrepancy (MMD) between joint distribution $\bm{\mu}_{i+1}$ and $\mathcal{T} \bm{\mu}_i$, which is defined as the equation below. 
\begin{equation}
\label{eq-mmd-definition}
    \text{MMD}^2(p, q; k) = \mathbb{E}_{x, x' \sim p}k(x, x') - 2\mathbb{E}_{x\sim p, y \sim q}k(x, y) + \mathbb{E}_{y, y' \sim q} k(y, y'),
\end{equation}
where $k(\cdot, \cdot)$ is some kernel function and MMD is defined upon $k$, and each pair of random variable $(x, x'), (x, y), (y, y')$ are independent. In our scenario, we use $\text{MMD}^2(\bm{\mu}_{i+1}(s,a), \mathcal{T} \bm{\mu}_i(s,a);k)$ as the temporal difference loss. 
The reason why we choose MMD as the objective function is that MMD is both a metric on distribution and easy to optimize with sampled transitions. 
The original paper of MMD \citep{gretton2012kernel} proves that MMD is a metric on distribution when kernel $k$ is characteristic, from which we can prove that
\begin{equation} \label{eq-zero-mmd-apply-operator}
    \bm{\mu}_{i+1} = \mathcal{T} \bm{\mu}_i \iff \forall (s,a) \in \mathcal{S} \times \mathcal{A}, \text{MMD}^2(\bm{\mu}_{i+1}(s,a),\mathcal{T} \bm{\mu}_i(s,a);k) = 0,
\end{equation}
when the kernel $k$ is characteristic. If the MMD metric is optimized to zero and the appropriate kernel is selected, 
$\bm{\mu}_{i+1}$ is the exact result of applying the joint distributional Bellman operator. 

In the algorithmic aspect, $\mathcal{T} \bm{\mu}_i(s,a)$ serves as the target distribution, $\bm{\mu}_{i+1}(s,a)$ serves as the prediction from online network, and we use $\text{MMD}^2(\bm{\mu}_{i+1}(s,a),\mathcal{T} \bm{\mu}_i(s,a);k)$ as temporal difference loss to match the online prediction and target distribution. Next, we establish the method to optimize $\text{MMD}^2(\bm{\mu}_{i+1}(s,a), \mathcal{T} \bm{\mu}_i(s,a);k)$ with transition samples. Note that we use $\bm{\mu}(s,a;\theta_i)$ to represent $\bm{\mu}_i(s,a)$, so the temporal difference loss can also be written as $\text{MMD}^2(\bm{\mu}(s,a;\theta_{i+1}), \mathcal{T} \bm{\mu}(s,a;\theta_{i});k)$. We use equation \eqref{eq-mmd-definition} to estimate the gradient of this squared MMD loss with respect to parameter $\theta_{i+1}$. Specifically, the first term in equation \eqref{eq-mmd-definition} can be unbiasedly estimated by drawing multiple independent samples from the online network $\bm{\mu}(s,a;\theta_{i+1})$; the second term of equation \eqref{eq-mmd-definition} can be unbiasedly estimated by drawing independent samples from the online network $\bm{\mu}(s,a;\theta_{i+1})$ and from the target distribution $\mathcal{T} \bm{\mu}(s,a;\theta_{i})$\footnote{We only require each sample from the online network $\bm{\mu}(s,a;\theta_{i+1})$ to be independent of every sample from the target distribution $\mathcal{T} \bm{\mu}(s,a;\theta_{i})$, and multiple samples from  the target distribution $\mathcal{T} \bm{\mu}(s,a;\theta_{i})$ can be dependent of each other.}. 
Batch of samples from the target joint distribution $\mathcal{T} \bm{\mu}(s,a;\theta_{i})$ can be collected from only one transition sample $(s,a,\bm{r},s')$, and Lemma 6 in Appendix A.1 provides detailed proofs. For the third term in equation \eqref{eq-mmd-definition}, it has no gradient with respect to the parameter $\theta_{i+1}$, and we can safely ignore it during optimization. In summary, the gradient of the temporal difference loss $\text{MMD}^2$ can be estimated without bias by transition samples. Algorithm~\ref{algorithm-1} summarizes the above analysis and shows how our method computes the gradient estimates of the temporal difference loss $\text{MMD}^2$.

\begin{algorithm}[]
\caption{Gradient estimation of $\text{MMD}^2$ loss by transition samples}\label{algorithm-1}
\begin{algorithmic}[1]
\STATEx \textbf{Require: } Number of samples $M$, kernel $k$, discount factor $\gamma \in (0, 1)$
\STATEx \textbf{Require: } Joint distribution network $\bm{\mu}(s,a;\theta)$
\STATEx \textbf{Input: } Transition sample $(s, a, \bm{r}, s')$
\STATEx \textbf{Input: } Online network parameter $\theta$, target network parameter $\theta'$
\STATEx \textbf{Output: } Gradient estimation of MMD with respect to $\theta$
    \STATE $a' \leftarrow \argmax_{a} \frac{1}{M}\sum_{m=1}^M \sum_{n=1}^N (\bm{Z}_{m})_n $, where $\bm{Z}_{1:M} \stackrel{i.i.d.}{\sim} \bm{\mu}(s', a;\theta')$. 
\STATE Sample $\bm{Z}_{1:M} \stackrel{i.i.d.}{\sim} \bm{\mu}(s,a;\theta)$ (samples from online network)
\STATE Sample $\bm{Z}_{1:M}^{next} \stackrel{i.i.d.}{\sim} \bm{\mu}(s',a';\theta')$ 
\STATE $\bm{Y_i} \leftarrow \bm{r} + \gamma \bm{Z}_i^{next}$, for every $1 \leq i \leq M$ (samples from target distribution)
\STATE $\text{MMD}^2 \leftarrow \sum_{1 \leq i \leq M} \sum_{1 \leq j \leq M, j \neq i} \left[ k(\bm{Z}_i, \bm{Z}_j) - 2k(\bm{Z}_i, \bm{Y}_j)  + k(\bm{Y}_i, \bm{Y}_j)\right]$
\STATE \textbf{return} $\nabla_{\theta} \text{MMD}^2$

\end{algorithmic}
\end{algorithm}

\subsection{Network Architecture and Implementation Details}

For a given state-action pair $(s,a)$, we use the value network of DQN~\citep{mnih2013playing} to compute the low-dimensional (512d) embedding of state $s$, and replace the final layer of DQN network to output a vector with $M \times N \times |\mathcal{A}|$ dimensions, representing $M$ samples from the joint return distribution from $N$ sources of reward for every action. 

We follow the kernel selection in MMDQN~\citep{nguyen2020distributional} and apply the Gaussian kernel function with mixed bandwidth\footnote{Gaussian kernel is proved to be characteristic~\citep{gretton2012kernel}, so the MMD distance with Gaussian kernel is a metric. It follows that when we use Gaussian kernels with mixed bandwidth, equation \eqref{eq-zero-mmd-apply-operator} still holds. }. 
\begin{align}
    k(x, x') &= \sum_{i=1}^{M} k_{w_i}(x,x'), \\
    \text{ where } k_{w_i}(x, x') &= e^{-\frac{\|x - x'\|_2^2}{w_i}}.
\end{align}

Each $w_i$ can be seen as the square of bandwidth in a Gaussian kernel. 
We choose the value of squared bandwidth $w_i$ with diverse ranges, to make sure that all scales of reward don't suffer from gradient vanishing. In Appendix A.3.4 and A.3.5, we test the performance of different network architectures and different bandwidth sets to find the optimal network architecture and bandwidth configurations. 

\section{Related Work}




Distributional RL algorithms propose to model the entire distribution, rather than the expectation, of the random variable return. C51~\citep{bellemare2017distributional} is the first distributional RL algorithm, and establishes the convergence of distributional RL algorithms based on the contraction property of the distributional Bellman operator. To approximate such an operator with sample transitions, different distributional algorithms employ different losses over distribution, along with different parameterization for distribution. C51 uses KL-divergence as loss and categorical distribution as parameterization. QR-DQN~\citep{dabney2018distributional} takes quantile regression as surrogate loss for the Wasserstein metric, and approximates a set of quantile values with fixed probabilities. IQN~\citep{dabney2018implicit} and FQF~\citep{FQF} extend QR-DQN to learn with sampled probabilities and self-adjusted probabilities. The current SOTA method in distributional RL is MMDQN~\citep{nguyen2020distributional}, which takes MMD loss with a non-parametric approach by modeling deterministic samples from return distribution. Our method extends the theoretical work of C51, and the empirical algorithm of MMDQN to multi-dimensional returns. While distributional RL algorithms focus on capturing the randomness in return, our method first proposes to capture the correlation between different randomness from different reward sources.


HRA~\citep{van2017hybrid} proposes a hybrid architecture to separately model the value functions for different sources of rewards. 
Their work provides empirical justification that learning with hybrid rewards can improve sample efficiency. 
HRA is built upon the Horde architecture~\citep{Horde}, which trains a separate general value function (GVF) for each pseudo-reward function. The Horde architecture uses a large number of GVFs to model general knowledge about the environment.
Previous works on reward decomposition also demonstrate that it is beneficial to learn with multiple reward functions~\citep{lin2020rd}. 
Based on HRA, our method further considers the correlated randomness in hybrid-source returns with a joint distribution model. 

Bellman GAN~\citep{freirich2019distributional} shows that distributional RL can be used for multi-dimensional rewards. Compared to this work, we additionally provide the theoretical convergence results for the joint distributional Bellman operator, an algorithm where the gradient of objective function~(MMD) can be unbiasedly estimated, and experimental results that directly compare the modeled joint distribution to the true distribution, demonstrating the accuracy of our method under diverse reward correlations. DRDRL~\citep{lin2019distributional} focuses on decomposing multiple types of rewards from one source reward, while in our paper, we are directly provided with multiple types of rewards as inputs, and model the joint distribution of different sources of rewards.

By modeling the joint return distribution, we provide an informative target to learn, which can be seen as auxiliary tasks. Previous works in RL have constructed various auxiliary tasks to learn better representations, such as methods based on temporal structures~\citep{aytar2018playing} and local spatial structures~\citep{anand2019unsupervised}.  Compared with these methods, our method does not entirely focus on learning better state representations. The learned joint distribution is also beneficial to risk-sensitive tasks~\citep{zhang2020cautious}. Besides, our method for learning the joint distribution of multi-dimensional random variables is general, and can be further combined with goal-conditioned RL~\citep{schaul2015universal} or successor representation~\citep{kulkarni2016deep} to capture correlated randomness in achieving different goals or visiting different states.

\section{Experimental Results}

In our experiments\footnote{Our code for the experiments is available in the supplementary material.}, we provide empirical results to answer the following questions: 
\begin{itemize}
    \item On policy evaluation settings, can MD3QN accurately model the joint distribution of multiple sources of reward? 
    \item On policy optimization settings, can MD3QN learn a better policy compared to HRA and distributional RL algorithm on environments with multiple sources of reward? 
\end{itemize}

To answer the first question, we design a maze environment with a rich way to generate correlated sources of rewards, and compare the joint distribution predicted by our method with the true joint distribution $\bm{\mu}^\pi$. The detailed setting of the environment and results is shown in Section 5.1. 

To answer the second question, we use several Atari games with multiple sources of reward, and provide training curve results of MD3QN compared to previous work HRA which deals with multiple sources of reward. The detailed setting of the environment and results is shown in Section 5.2. 

\subsection{Modeling Joint Return Distribution for Multi-Dimensional Reward Functions}

In this experiment, we validate the capacity of MD3QN to model the joint return distribution for multiple sources of rewards on the policy evaluation setting with pixel inputs. The experiment is performed on the maze environments with rich reward correlation signals described as follows. 


Figure \ref{fig-maze-env} illustrates three maze environments with different reward correlation properties. 
The agent (represented by the triangle) is initially located at a specific position in the maze, and the policy $\pi$ is a fixed policy to uniformly choose one direction at random and try to move. If the agent is blocked by the walls in that direction\footnote{Colored walls can only be passed in one way, where yellow walls can be passed from the top or from the left, and red walls can be passed from the bottom or from the right.}, or the agent is trying to return to a previous position, the move has no effect, otherwise, the agent moves in that direction for one block.  Each color of the square in the maze represents one source of reward. When the agent reaches the position with reward, it receives a reward of the source aligned with the color of the square\footnote{More detailed description of how reward is set in these three environments is provided in Appendix A.2.}. The rules described above are general enough to enable the design of diversely correlated sources of rewards detailed below. 

Figure \ref{fig-maze-env-1} shows a maze environment with two exclusive sources of rewards. Once the agent obtains a reward from one side, it cannot obtain the reward from the other side.  Figure \ref{fig-maze-env-2} shows a maze environment with two positively correlated sources of rewards, where the agent either gets no reward or gets both of the red and green rewards. Figure \ref{fig-maze-env-3} shows a maze environment with four correlated sources of rewards. In these three environments, the agent needs to capture the negative correlation, positive correlation, and the complex correlation in multi-dimensional rewards respectively to precisely model the joint return distribution.


On each of the three mazes, we train the MD3QN agent for 5 iterations (1.25M frames) to model the joint distribution of all sources of reward, and compare the prediction of joint distribution by the model to the samples from the true distribution $\bm{\mu}^\pi$ on the initial state of the maze. The results are shown in Figure \ref{fig-result-maze}, where each point in the figure is a sample representing one possibility of future discounted return in each source of reward. The result shows that MD3QN accurately models the joint return distribution from different sources of reward. 

In Appendix A.3.1, we visualize the modeled joint distribution throughout the training process, where the joint distribution modeled by MD3QN gradually match the true distribution. 

It is also worth noting that the observation for the agent is based on pixels, computed by downsampling the image in Figure \ref{fig-maze-env} to the size of $84\times 84$, which requires the agent to extract information on high-dimensional inputs.

\begin{figure}[]
  \centering
  \begin{subfigure}[b]{0.32\linewidth}
     \centering
     \includegraphics[trim={2cm 0 2cm 0},clip,width=3.5cm]{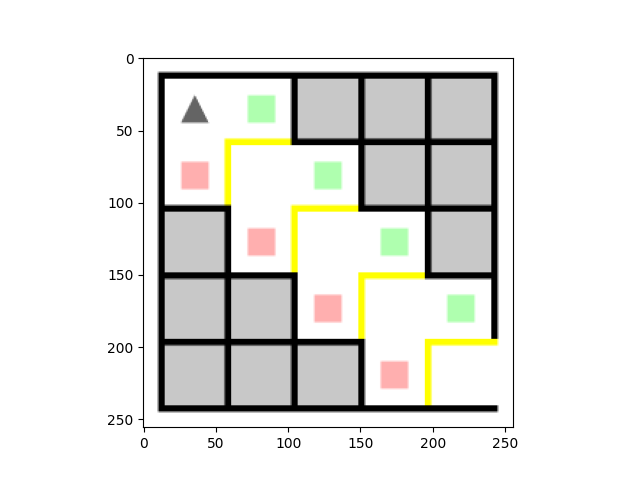}
     \caption{}
     \label{fig-maze-env-1}
  \end{subfigure}
  \hfill
  \begin{subfigure}[b]{0.32\linewidth}
     \centering
     \includegraphics[trim={2cm 0 2cm 0},clip,width=3.5cm]{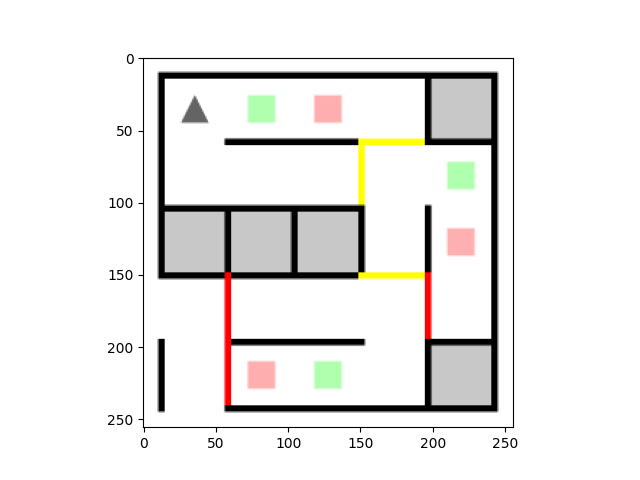}
     \caption{}
     \label{fig-maze-env-2}
  \end{subfigure}
  \hfill
  \begin{subfigure}[b]{0.32\linewidth}
     \centering
     \includegraphics[trim={2cm 0 2cm 0},clip,width=3.5cm]{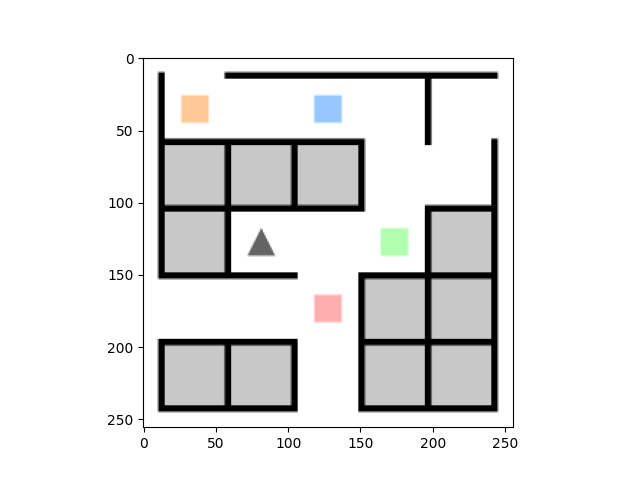}
     \caption{}
     \label{fig-maze-env-3}
  \end{subfigure}
  \caption{Observation of initial states in maze environments. (a): maze environment ``maze-exclusive'' with two exclusive sources of rewards. (b): maze environment ``maze-identical'' with two positively correlated sources of rewards. (c): maze environment ``maze-multireward'' with four correlated sources of rewards. }
  \label{fig-maze-env}
\end{figure}

\begin{figure}[]
  \centering
  \includegraphics[width=\textwidth]{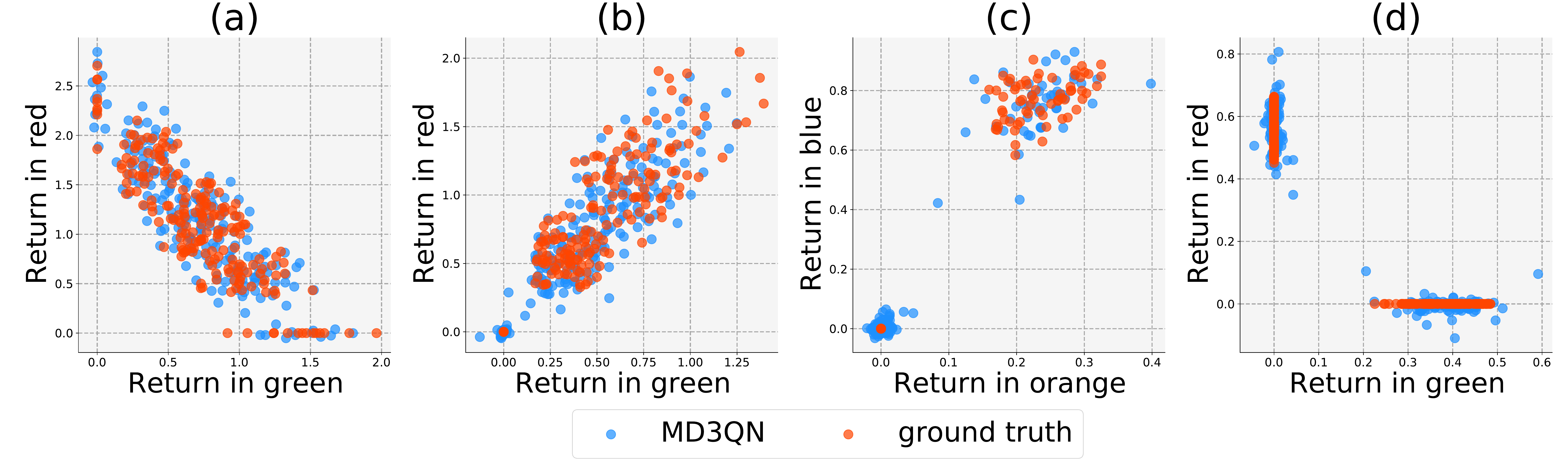}
  \caption{ Comparison between samples from joint distribution by MD3QN and samples from the true distribution $\bm{\mu}^\pi$ in three maze environments. The number of samples is set to be 200. (a): the result of ``maze-exclusive'' environment. (b): the result of ``maze-identical'' environment. (c): the result of ``maze-multireward'' environment of orange and blue reward sources. (d): the result of ``maze-multireward'' environment of green and red reward sources.}
  \label{fig-result-maze}
\end{figure}

\subsection{Performance on Atari Games} \label{section-exp-atari}

In this experiment, we compare MD3QN algorithm with Hybrid Reward Architecture~(HRA)~\citep{van2017hybrid} and MMDQN~\citep{nguyen2020distributional} on Atari games from Arcade Learning Environment \citep{bellemare2013arcade}. Our implementation of MD3QN is built upon the Dopamine framework \citep{castro18dopamine}. The hyper-parameter settings used by MD3QN and the environment settings are detailed in Appendix A.2.

We use six Atari games with multiple sources of rewards: AirRaid, Asteroids, Gopher, MsPacman, UpNDown, and Pong. In all of these six games, the primitive rewards can be decomposed into multiple sources of rewards as follows: 
\begin{itemize}
    \item In AirRaid and Asteroids, the agent gets different value of reward for killing different types of monsters. 
    \item In Gopher, the agent gets +80 reward for killing a monster and gets +20 reward for removing holes on the ground.
    \item In MsPacman, the agent gets \{+200, +400, +800, +1600\} reward for killing a monster and gets +10 reward for eating beans.
    \item In UpNDown, the agent gets +400 reward for killing an enemy car, +100 reward for reaching a flag, and +10 reward for being alive. 
    \item In Pong, the agent gets +1 reward for winning a round, and gets -1 reward for losing a round.
\end{itemize}

We decompose the primitive rewards into multi-dimensional rewards according to the above reward structure, and meanwhile keeping the total reward not changed. The details for the reward decomposition method is provided in Appendix A.2. The training curves of MD3QN compared to HRA~\citep{van2017hybrid} and MMDQN~\citep{nguyen2020distributional} are shown in Figure~\ref{fig-atari-curve}. 

\begin{figure}
  \centering
  \includegraphics[width=0.75\textwidth]{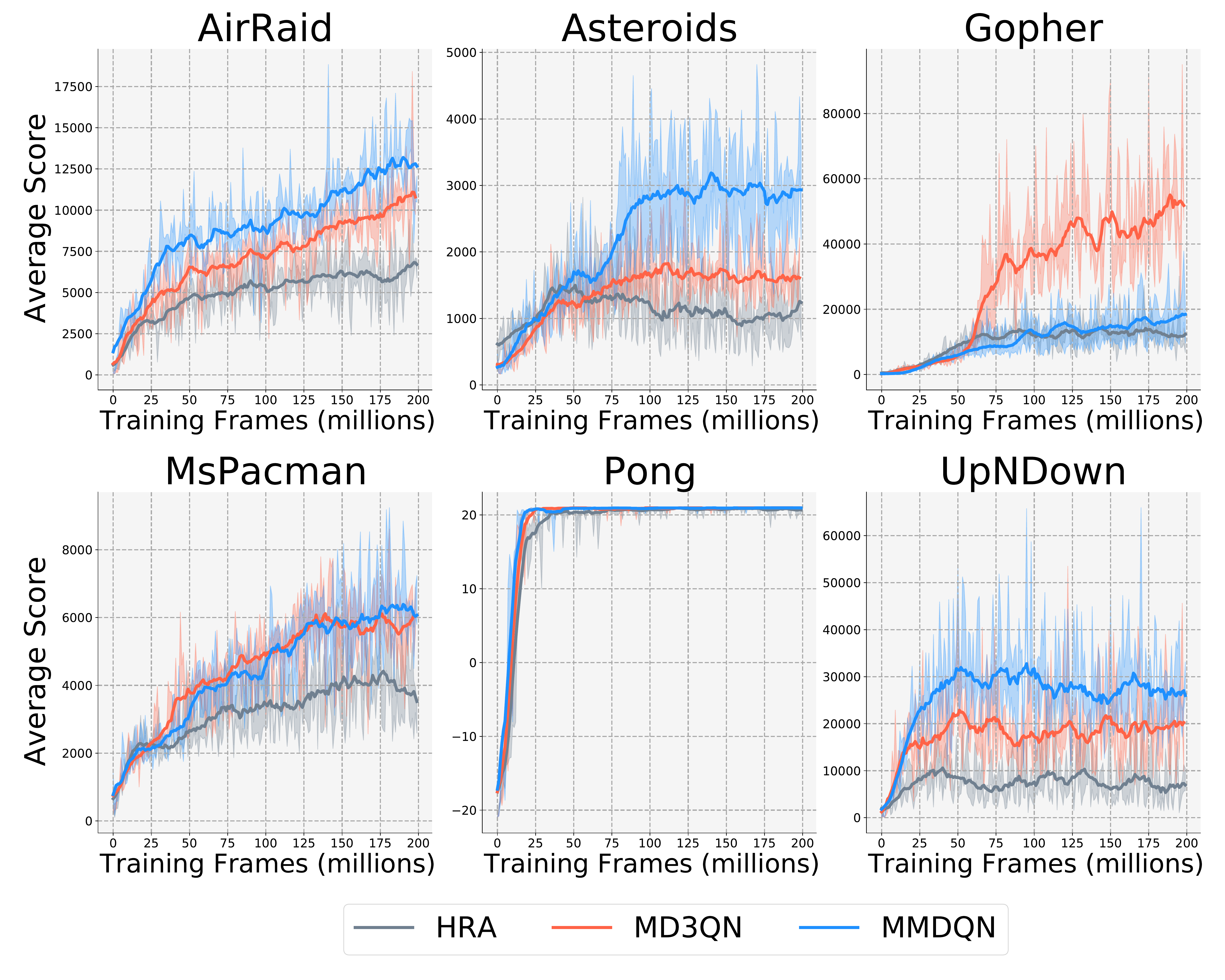}
  \caption{ Performance of MD3QN on Atari games compared to HRA and MMDQN. The shaded areas show the standard deviation across 3 seeds. }
  \label{fig-atari-curve}
\end{figure}

Figure \ref{fig-atari-curve} shows that MD3QN does not outperform MMDQN in performance under several Atari games. We suspect the reason why MD3QN cannot outperform MMDQN is that the performance of Atari games might not be related to the joint distribution. We look forward to further studies on how the additional information captured in distributional RL algorithms helps to improve the final performance, to better interpret the experimental results. Despite that, with the distributional information, MD3QN outperforms or matches the performance of HRA under all six games.

On Atari games, we also show the joint distributions modeled by MD3QN, which is detailed in Appendix A.3.2. Overall, both the reward correlation and the reward scale in the return distribution are reasonably captured by MD3QN. The joint distribution captured by MD3QN show different and reasonable reward correlation across different Atari games. 

To show the necessity of modeling the joint distribution, we add another experiment in Appendix A.3.3 showing that only modeling marginal distributions may fail in environments with multiple correlated constraints, from theory and experiment results. 

\section{Conclusions}

In this work, we have proposed Multi-Dimensional Distributional DQN~(MD3QN), a distributional RL method that learns a multi-dimensional joint return distribution for multiple sources of rewards. The effectiveness of our method is verified on pixel-input environments in terms of both the quality of modeled joint distribution, and the final performance of learnt policies. 

In the future, it is possible to extend our framework to model the correlated randomness in achieving different goals with goal-conditioned RL, and in visiting different states with successor representation. We will leverage such informative modeling of the environment in terms of goals and successor states to develop novel RL algorithms in our future work.

\begin{ack}
We thank the editor and each reviewer for their constructive comments and suggestions. We thank Tengyu Ma for the valuable suggestion on the MD3QN algorithm. 

Funding in direct support of this work: Microsoft.

\end{ack}


\bibliographystyle{apalike}
\bibliography{ref.bib}

\newpage
\appendix
\setcounter{theorem}{0}
\setcounter{corollary}{0}

\section{Appendix}

 We use $\text{Law}(X)$ to denote the distribution of random variable $X$. When $\nu$ is a probability distribution for over set $\Omega$ and $A$ is a subset of $\Omega$, we use $\nu(A)$ to denote the probability that the random variable $X$ belongs to $A$, when $X$ is sampled from distribution $\nu$.

\subsection{Proofs}

\begin{lemma} \label{lemma-1}
    If $\nu = \text{Law}(X)$, then $f_{\#}\nu = \text{Law}(f(X))$.
\end{lemma}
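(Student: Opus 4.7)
The plan is to prove this by directly unfolding the definitions of the pushforward measure and the law (distribution) of a random variable, and verifying they agree on every Borel set. Since both sides are probability measures on $\mathbb{R}^N$ (or $\mathbb{R}$, depending on the codomain of $f$), it suffices to check equality on an arbitrary Borel set $A$.

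First, I would recall the definition from the main text: for a measurable $f$ and a measure $\nu$, the pushforward satisfies $f_{\#}\nu(A) = \nu(f^{-1}(A))$ for every Borel set $A$. Second, I would use the definition of the law of a random variable: $\text{Law}(Y)(A) = \Pr(Y \in A)$. Combining these, the computation is
\begin{equation*}
f_{\#}\nu(A) \;=\; \nu(f^{-1}(A)) \;=\; \Pr(X \in f^{-1}(A)) \;=\; \Pr(f(X) \in A) \;=\; \text{Law}(f(X))(A),
\end{equation*}
where the second equality uses $\nu = \text{Law}(X)$, and the third uses the set-theoretic identity $\{X \in f^{-1}(A)\} = \{f(X) \in A\}$, which in turn relies only on the definition of preimage. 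Since $A$ was arbitrary, the two measures coincide.

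There is essentially no obstacle here; the only subtlety worth flagging is measurability, i.e., that $f^{-1}(A)$ is measurable whenever $A$ is, so that the expressions above are well-defined. This is automatic under the standing assumption that $f$ is Borel measurable (consistent with the usage in \cite{rowland2018analysis} cited in the text). The lemma will then serve as a convenient bridge in subsequent proofs for rewriting pushforwards $f_{\#}\nu$ as laws of transformed random variables, particularly for the affine maps $f_{\bm{r},\gamma}(\bm{x}) = \bm{r} + \gamma \bm{x}$ used in the joint distributional Bellman operator.
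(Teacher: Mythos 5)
Your proof is correct and follows essentially the same route as the paper's: both unfold the definitions of the pushforward and the law and verify the chain $f_{\#}\nu(A) = \nu(f^{-1}(A)) = \Pr(X \in f^{-1}(A)) = \Pr(f(X) \in A)$ on an arbitrary Borel set $A$. Your additional remark on Borel measurability of $f$ is a reasonable (and harmless) clarification.
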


\begin{proof}
    Since $\nu = \text{Law}(X)$, for all Borel sets $A$, $P(X \in A) = \nu(A)$. 
    
    By definition, $f_\# \nu(A) = \nu(f^{-1}(A))$ for all Borel sets $A$, and it follows that 
    \begin{equation*}
        f_\# \nu(A) = P(X \in f^{-1}(A)) = P(f(X) \in A)
    \end{equation*}
    for all Borel sets $A$. 
    
    We thus proved $f_\# \nu = \text{Law}(f(X))$. 
\end{proof}

\begin{lemma} \label{lemma-2}
    We have 
    \begin{equation*}
        \mathcal{T}^\pi \bm{\mu}(s_t,a_t) = \text{Law}(f_{\gamma,\bm{r}_t}(\bm{Z}))\text{, i.e. } \mathcal{T}^\pi \bm{\mu}(s_t,a_t) = \text{Law}(\gamma\bm{Z} + \bm{r}_t)
    \end{equation*}
    where $s_{t+1} \sim P(s_{t+1}|s_t,a_t), a_{t+1} \sim \pi(a_{t+1}|s_{t+1}), \bm{r}_t \sim \bm{R}(\bm{r}_t|s_t,a_t)$, and $\bm{Z} \sim \bm{\mu}(s_{t+1},a_{t+1})$. 
\end{lemma}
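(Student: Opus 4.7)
The plan is to unpack the definition of $\mathcal{T}^\pi \bm{\mu}(s_t,a_t)$ on an arbitrary Borel set $A \subseteq \mathbb{R}^N$ and rewrite the nested integrals as a single joint probability. First I would fix a Borel set $A \subseteq \mathbb{R}^N$ and evaluate $\mathcal{T}^\pi \bm{\mu}(s_t,a_t)(A)$ from the definition, obtaining the triple integral
\begin{equation*}
\int_{\mathcal{S}} \int_{\mathcal{A}} \int_{\mathbb{R}^N} (f_{\bm{r}_t,\gamma})_{\#} \bm{\mu}(s_{t+1},a_{t+1})(A)\, R(d\bm{r}_t \mid s_t,a_t)\, \pi(da_{t+1}\mid s_{t+1})\, P(ds_{t+1}\mid s_t,a_t).
\end{equation*}

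Next I would apply Lemma \ref{lemma-1} with $f = f_{\bm{r}_t,\gamma}(\bm{x}) = \gamma\bm{x} + \bm{r}_t$ and $\nu = \bm{\mu}(s_{t+1},a_{t+1})$, so that for a random vector $\bm{Z} \sim \bm{\mu}(s_{t+1},a_{t+1})$ we have $(f_{\bm{r}_t,\gamma})_{\#} \bm{\mu}(s_{t+1},a_{t+1})(A) = P(\gamma\bm{Z} + \bm{r}_t \in A \mid s_{t+1}, a_{t+1}, \bm{r}_t)$. The integrand then becomes a conditional probability, and the three outer integrals simply average this conditional probability against the conditional laws of $\bm{r}_t$, $a_{t+1}$, and $s_{t+1}$ specified in the statement.

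I would then invoke the law of total probability (equivalently, Fubini/tower property) to collapse these marginalizations into the joint probability $P(\gamma\bm{Z} + \bm{r}_t \in A)$ taken under the joint law induced by $s_{t+1}\sim P(\cdot\mid s_t,a_t)$, $a_{t+1}\sim \pi(\cdot\mid s_{t+1})$, $\bm{r}_t\sim \bm{R}(\cdot\mid s_t,a_t)$, and $\bm{Z}\sim \bm{\mu}(s_{t+1},a_{t+1})$. By the definition of $\text{Law}(\cdot)$, this equals $\text{Law}(\gamma\bm{Z} + \bm{r}_t)(A)$. Since $A$ was arbitrary, the two measures coincide, which proves the claim.

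The main obstacle I anticipate is being careful with measurability and with the conditional independence structure when folding the three integrals into a joint expectation, especially since $\bm{Z}$ depends on $(s_{t+1},a_{t+1})$ while $\bm{r}_t$ is conditionally independent of them given $(s_t,a_t)$. The only real content is bookkeeping of the correct conditioning; once the pushforward is rewritten via Lemma \ref{lemma-1}, everything else is an application of Fubini on a product of regular conditional distributions.
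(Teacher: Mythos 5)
Your proposal is correct and follows essentially the same argument as the paper's proof: both reduce the claim to Lemma \ref{lemma-1} plus the law of total probability over $(s_{t+1},a_{t+1},\bm{r}_t)$, with you merely running the chain of equalities from the operator's definition toward $\text{Law}(\gamma\bm{Z}+\bm{r}_t)$ while the paper expands $P(f_{\gamma,\bm{r}_t}(\bm{Z})\in A)$ in the opposite direction. No substantive difference or gap.
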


\begin{proof}
    For all Borel sets $A$, we have 
    \begin{equation*}
    \begin{split}
        &\ \ \ \ P(f_{\gamma,\bm{r}_t}(\bm{Z}) \in A) \\
        &= \int_{\mathbb{R}^N} P(f_{\gamma,\bm{r}_t}(\bm{Z}) \in A) \bm{R}(d\bm{r}_t|s_t,a_t)  \\
        &= \int_{\mathcal{S}} \int_{\mathcal{A}} \int_{\mathbb{R}^N} P(f_{\gamma,\bm{r}_t}(\bm{Z}(s_{t+1}, a_{t+1})) \in A) \bm{R}(d\bm{r}_t|s_t,a_t) P(ds_{t+1}|s_t,a_t) \pi(da_{t+1}|s_{t+1})
    \end{split}
    \end{equation*}
    where $\text{Law}(\bm{Z}(s_{t+1}, a_{t+1})) = \bm{\mu}(s_{t+1},a_{t+1})$. By Lemma~\ref{lemma-1}, 
    \begin{equation*}
        \text{Law}(f_{\gamma,\bm{r}_t}(\bm{Z}(s_{t+1}, a_{t+1}))) = f_{\gamma,\bm{r}_t\#}\bm{\mu}(s_{t+1},a_{t+1}),
    \end{equation*}
    it follows that
    \begin{equation*}
    \begin{split}
        &\ \ \ \ P(f_{\gamma,\bm{r}_t}(\bm{Z}) \in A) \\
        &= \int_{\mathcal{S}} \int_{\mathcal{A}} \int_{\mathbb{R}^N} f_{\gamma,\bm{r}_t\#}\bm{\mu}(s_{t+1},a_{t+1})(A) \bm{R}(d\bm{r}_t|s_t,a_t) P(ds_{t+1}|s_t,a_t) \pi(da_{t+1}|s_{t+1})
    \end{split}
    \end{equation*}
    for all Borel sets $A$. We then have
    \begin{equation*}
        \text{Law}(f_{\gamma,\bm{r}_t}(\bm{Z})) = \int_{\mathcal{S}} \int_{\mathcal{A}} \int_{\mathbb{R}^N} f_{\gamma,\bm{r}_t\#}\bm{\mu}(s_{t+1},a_{t+1}) \bm{R}(d\bm{r}_t|s_t,a_t) P(ds_{t+1}|s_t,a_t) \pi(da_{t+1}|s_{t+1})
    \end{equation*}
    and $\text{Law}(f_{\gamma,\bm{r}_t}(\bm{Z})) = \mathcal{T}^\pi\bm{\mu}(s_t,a_t)$. 
\end{proof}

\begin{lemma} \label{lemma-3}
    If $\bm{r} \in \mathbb{R}^N$, we have
    \begin{equation*}
        W_p(\text{Law}(f_{\gamma, \bm{r}}(\bm{Z}_1)), \text{Law}(f_{\gamma, \bm{r}}(\bm{Z}_2))) \leq \gamma W_p(\text{Law}(\bm{Z}_1), \text{Law}(\bm{Z}_2))
    \end{equation*}
\end{lemma}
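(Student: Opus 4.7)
The plan is to use the coupling (optimal transport) characterization of the Wasserstein distance together with the fact that the map $f_{\gamma,\bm{r}}(\bm{x}) = \gamma\bm{x} + \bm{r}$ acts as a $\gamma$-Lipschitz affine transformation on $\mathbb{R}^N$ under Euclidean distance. Specifically, observe the pointwise identity
\begin{equation*}
d(f_{\gamma,\bm{r}}(\bm{x}), f_{\gamma,\bm{r}}(\bm{y})) = \|\gamma\bm{x}+\bm{r} - (\gamma\bm{y}+\bm{r})\|_2 = \gamma\, d(\bm{x}, \bm{y}),
\end{equation*}
so the shift by $\bm{r}$ cancels and the scaling $\gamma$ factors out cleanly. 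This identity is what will ultimately yield the factor $\gamma$ in the statement.

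First I would let $\nu_i = \text{Law}(\bm{Z}_i)$ and take an arbitrary coupling $f \in \Gamma(\nu_1, \nu_2)$. Next I would construct the pushforward coupling $\tilde f := (f_{\gamma,\bm{r}} \otimes f_{\gamma,\bm{r}})_{\#} f$ and verify, using Lemma~\ref{lemma-1} applied componentwise, that its marginals are exactly $\text{Law}(f_{\gamma,\bm{r}}(\bm{Z}_1))$ and $\text{Law}(f_{\gamma,\bm{r}}(\bm{Z}_2))$, so $\tilde f \in \Gamma(\text{Law}(f_{\gamma,\bm{r}}(\bm{Z}_1)), \text{Law}(f_{\gamma,\bm{r}}(\bm{Z}_2)))$. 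Then, using the pointwise identity above and a change of variables,
\begin{equation*}
\int_{\mathbb{R}^N\times\mathbb{R}^N} d(\bm{u},\bm{v})^p\, \tilde f(\bm{u},\bm{v})\, d^N\bm{u}\, d^N\bm{v} = \gamma^p \int_{\mathbb{R}^N\times\mathbb{R}^N} d(\bm{x},\bm{y})^p\, f(\bm{x},\bm{y})\, d^N\bm{x}\, d^N\bm{y}.
\end{equation*}

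Finally, taking the infimum over $f \in \Gamma(\nu_1,\nu_2)$ on the right (and noting that the set of pushforward couplings $\tilde f$ is a subset of $\Gamma(\text{Law}(f_{\gamma,\bm{r}}(\bm{Z}_1)), \text{Law}(f_{\gamma,\bm{r}}(\bm{Z}_2)))$, so taking infimum over the possibly larger left set only lowers the value) and then raising to the power $1/p$ gives the desired bound. I do not expect any real obstacle here: the argument is a direct coupling push-forward, and the only items to check carefully are (i) that $\tilde f$ really has the correct marginals, which is a one-line application of Lemma~\ref{lemma-1}, and (ii) the change-of-variables step, which is routine since $f_{\gamma,\bm{r}}\otimes f_{\gamma,\bm{r}}$ is a measurable bijection. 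The whole argument does not use any special structure of $\bm{r}$, so the shift is handled ``for free'' by the translation invariance of Euclidean distance.
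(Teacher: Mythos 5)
Your proposal is correct and follows essentially the same route as the paper's proof: push a coupling of $\text{Law}(\bm{Z}_1),\text{Law}(\bm{Z}_2)$ forward componentwise through $f_{\gamma,\bm{r}}$, check the marginals, and use $d(f_{\gamma,\bm{r}}(\bm{x}),f_{\gamma,\bm{r}}(\bm{y}))=\gamma\,d(\bm{x},\bm{y})$ with a change of variables. The only cosmetic difference is that the paper starts from an $\epsilon$-near-optimal coupling and lets $\epsilon\to 0$, whereas you push forward an arbitrary coupling and take the infimum at the end; both are valid.
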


\begin{proof}
   
    Denote $\nu_1=\text{Law}(\bm{Z}_1), \nu_2=\text{Law}(\bm{Z}_2)$. Then we have
    \begin{equation*}
        \text{Law}(f_{\gamma,\bm{r}}(\bm{Z}_1)) = f_{\gamma, \bm{r}\#}\nu_1, \text{Law}(f_{\gamma,\bm{r}}(\bm{Z}_2)) = f_{\gamma, \bm{r}\#}\nu_2
    \end{equation*}
    and Lemma~\ref{lemma-3} is equivalent to 
    \begin{equation} \label{eq-25}
        W_p(f_{\gamma, \bm{r}\#}\nu_1, f_{\gamma, \bm{r}\#}\nu_2) \leq \gamma W_p(\nu_1, \nu_2). 
    \end{equation}
   
For arbitrary $\epsilon > 0$, suppose $\nu \in \Gamma(\nu_1, \nu_2)$ satisfies that 
\begin{equation*}
     (\int_{\mathbb{R}^N \times \mathbb{R}^N} d(x, y)^p \nu(d^N x, d^N y))^{1/p} < W_p(\nu_1, \nu_2) + \epsilon
\end{equation*}

We consider the distribution $f_{\gamma, \bm{r}\#}\nu$, which holds that 
\begin{equation*}
    f_{\gamma, \bm{r}\#}\nu(x,y) = \nu(f_{\gamma, \bm{r}}^{-1}(x), f_{\gamma, \bm{r}}^{-1}(y))
\end{equation*}
The marginal distribution on the first $N$ random variables for $f_{\gamma, \bm{r}\#}\nu$ is 
\begin{equation*}
\begin{split}
    f_{\gamma, \bm{r}\#}\nu(x,\mathbb{R}^N) &= \nu(f_{\gamma, \bm{r}}^{-1}(x), f_{\gamma, \bm{r}}^{-1}(\mathbb{R}^N)) \\
    &= \nu(f_{\gamma, \bm{r}}^{-1}(x), \mathbb{R}^N) \\
    &= \nu_1(f_{\gamma, \bm{r}}^{-1}(x)) = f_{\gamma, \bm{r}\#}\nu_1.
\end{split}
\end{equation*}

Similarly, the marginal distribution on the next $N$ random variables for $f_{\gamma, \bm{r}\#}\nu$ is $f_{\gamma, \bm{r}\#}\nu_2$. We thus have $f_{\gamma, \bm{r}\#}\nu \in \Gamma(f_{\gamma, \bm{r}\#}\nu_1, f_{\gamma, \bm{r}\#}\nu_2)$. 

It follows that
\begin{equation*}
\begin{split}
    W_p(f_{\gamma, \bm{r}\#}\nu_1, f_{\gamma, \bm{r}\#}\nu_2) &\leq \int_{\mathbb{R}^N \times \mathbb{R}^N} d(x, y)^p f_{\gamma, \bm{r}\#}\nu(d^N x, d^N y))^{1/p} \\
    &= (\int_{\mathbb{R}^N \times \mathbb{R}^N} d(x, y)^p \nu(d^N (f_{\gamma, \bm{r}}^{-1}(x)), d^N (f_{\gamma, \bm{r}}^{-1}(y))))^{1/p} \\
    &= (\int_{\mathbb{R}^N \times \mathbb{R}^N} d(f_{\gamma, \bm{r}}(x), f_{\gamma, \bm{r}}(y))^p \nu(d^N x, d^N y))^{1/p} \\
    &= \gamma (\int_{\mathbb{R}^N \times \mathbb{R}^N} d(x, y)^p \nu(d^N x, d^N y))^{1/p} \\
    &= \gamma W_p(\nu_1, \nu_2) + \gamma \epsilon. 
\end{split}
\end{equation*}

The above equation holds for arbitrary $\epsilon > 0$, thus $W_p(f_{\gamma, \bm{r}\#}\nu_1, f_{\gamma, \bm{r}\#}\nu_2) \leq \gamma W_p(\nu_1, \nu_2)$. We thus proved equation~\eqref{eq-25} and Lemma~\ref{lemma-3} is proved. 

\end{proof}

\begin{lemma} \label{lemma-4}
    Suppose $\bm{Z}_1(\cdot|A)$ and $\bm{Z}_2(\cdot|A)$ are two conditional distribution with range $\mathbb{R}^N$, and for all values of $a \in \Omega$,
    \begin{equation} \label{eq-lemma-4}
        W_p(\text{Law}(\bm{Z}_1(a)), \text{Law}(\bm{Z}_2(a))) \leq c
    \end{equation}
    where $\bm{Z}_1(a) \sim \bm{Z}_1(\cdot|A=a)$, and $\bm{Z}_2(a) \sim \bm{Z}_2(\cdot|A=a)$. 
    
    Then for any distribution $p(A)$ and $A \sim p(A)$, we have
    \begin{equation}
        W_p(\text{Law}(\bm{Z}_1), \text{Law}(\bm{Z}_2)) \leq c
    \end{equation}
    where $\bm{Z}_1 \sim \bm{Z}_1(\cdot|A)$, $\bm{Z}_2 \sim \bm{Z}_2(\cdot|A)$. 
\end{lemma}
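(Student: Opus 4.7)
The plan is to build a coupling of the unconditional laws $\text{Law}(\bm{Z}_1)$ and $\text{Law}(\bm{Z}_2)$ by mixing nearly-optimal couplings of the conditional laws against $p(A)$, and to bound its transport cost using the hypothesis \eqref{eq-lemma-4}.

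First, fix $\epsilon > 0$. For every $a \in \Omega$, the definition of $W_p$ guarantees the existence of a coupling $\gamma_a \in \Gamma(\text{Law}(\bm{Z}_1(a)), \text{Law}(\bm{Z}_2(a)))$ with
\begin{equation*}
\Bigl(\int_{\mathbb{R}^N \times \mathbb{R}^N} d(x,y)^p \,\gamma_a(d^N x, d^N y)\Bigr)^{1/p} \leq W_p(\text{Law}(\bm{Z}_1(a)), \text{Law}(\bm{Z}_2(a))) + \epsilon \leq c + \epsilon.
\end{equation*}
(The measurable dependence of $\gamma_a$ on $a$ can be secured by a standard measurable selection argument; we take it as given here so as not to clutter the exposition.) Then define the mixture coupling $\gamma$ on $\mathbb{R}^N \times \mathbb{R}^N$ by
\begin{equation*}
\gamma(B) = \int_\Omega \gamma_a(B) \, p(da) \qquad \text{for every Borel } B \subseteq \mathbb{R}^N \times \mathbb{R}^N.
\end{equation*}

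Next I would verify that $\gamma \in \Gamma(\text{Law}(\bm{Z}_1), \text{Law}(\bm{Z}_2))$. For a Borel set $U \subseteq \mathbb{R}^N$, the first marginal of $\gamma$ is
\begin{equation*}
\gamma(U \times \mathbb{R}^N) = \int_\Omega \gamma_a(U \times \mathbb{R}^N)\, p(da) = \int_\Omega \text{Law}(\bm{Z}_1(a))(U)\, p(da) = \text{Law}(\bm{Z}_1)(U),
\end{equation*}
where the last equality is the law of total probability for $\bm{Z}_1 \sim \bm{Z}_1(\cdot\mid A)$ with $A \sim p$. The second marginal is handled identically, giving $\text{Law}(\bm{Z}_2)$.

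Finally, applying Fubini to the nonnegative integrand $d(x,y)^p$,
\begin{equation*}
\int_{\mathbb{R}^N \times \mathbb{R}^N} d(x,y)^p \,\gamma(d^N x, d^N y) = \int_\Omega \Bigl(\int_{\mathbb{R}^N \times \mathbb{R}^N} d(x,y)^p \,\gamma_a(d^N x, d^N y)\Bigr) p(da) \leq (c + \epsilon)^p.
\end{equation*}
Taking the $p$-th root and using the definition of $W_p$ as an infimum over couplings gives $W_p(\text{Law}(\bm{Z}_1), \text{Law}(\bm{Z}_2)) \leq c + \epsilon$. Since $\epsilon > 0$ was arbitrary, the claim follows. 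The only genuinely delicate step is ensuring measurability of $a \mapsto \gamma_a$, which is the main technical obstacle but is standard (e.g., by a Kuratowski--Ryll-Nardzewski style selection); once this is in place the rest of the argument is routine.
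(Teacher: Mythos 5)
Your proof is correct and takes essentially the same route as the paper's: construct a mixture coupling $\gamma = \int_\Omega \gamma_a\, p(da)$ from $\epsilon$-near-optimal conditional couplings, verify its marginals are $\text{Law}(\bm{Z}_1)$ and $\text{Law}(\bm{Z}_2)$, bound the transport cost by $(c+\epsilon)^p$, and let $\epsilon \to 0$. You are in fact slightly more careful than the paper in flagging the need for measurable dependence of $a \mapsto \gamma_a$, a point the paper's proof passes over silently.
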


\begin{proof}
    By equation~\eqref{eq-lemma-4}, for arbitrary $\epsilon > 0$, there exists $\nu_a$ such that:
    \begin{equation*}
        \text{For all } a, \nu_a \in \Gamma(\text{Law}(\bm{Z}_1(a)), \text{Law}(\bm{Z}_2(a))), \text{ and } (\int_{\mathbb{R}^N \times \mathbb{R}^N}d(x,y)^p \nu_a(d^N x, d^N y))^{1/p} \leq c + \epsilon
    \end{equation*}

    We also have
    \begin{equation*}
        \text{Law}(\bm{Z}_1)(\cdot) = \int_\Omega \text{Law}(\bm{Z}_1(a))(\cdot) p(da)
    \end{equation*}
    
    We construct distribution $\nu$ such that
    \begin{equation*}
        \nu(x,y) = \int_\Omega \nu_a(x, y) p(da)
    \end{equation*}
    
    The marginal distribution on the first $N$ random variables for $\nu$ is
    \begin{equation*}
    \begin{split}
        \nu(x, \mathbb{R}^N) &= \int_\Omega \nu_a(x, \mathbb{R}^N) p(da) \\
        &= \int_\Omega \text{Law}(\bm{Z}_1)(x) p(da) \\
        &= \text{Law}(\bm{Z}_1)(x)
    \end{split}
    \end{equation*}
    
    Similarly, the marginal distribution on the next $N$ random variables for $\nu$ is $\text{Law}(\bm{Z}_2)(x)$. We thus have $\nu \in \Gamma(\text{Law}(\bm{Z}_1), \text{Law}(\bm{Z}_2))$. 
    
    It follows that
    \begin{equation*}
    \begin{split}
        W_p(\text{Law}(\bm{Z}_1), \text{Law}(\bm{Z}_2)) &\leq (\int_{\mathbb{R}^N \times \mathbb{R}^N}d(x,y)^p \nu(d^N x, d^N y))^{1/p} \\
        &= (\int_{\mathbb{R}^N \times \mathbb{R}^N}d(x,y)^p \int_\Omega \nu_a(d^N x, d^N y) p(da) )^{1/p} \\
        &= (\int_\Omega \int_{\mathbb{R}^N \times \mathbb{R}^N}d(x,y)^p \nu_a(d^N x, d^N y) p(da) )^{1/p} \\
        &\leq (\int_\Omega (c+\epsilon)^p p(da) )^{1/p} = c + \epsilon.
    \end{split}
    \end{equation*}
    
    The above equation holds for arbitrary $\epsilon > 0$, thus $W_p(\text{Law}(\bm{Z}_1), \text{Law}(\bm{Z}_2)) \leq c$. 
\end{proof}

\begin{theorem}
    For two joint distributions $\bm{\mu}_1$ and $\bm{\mu}_2$, 
    we have
    \begin{equation}
        \bar{d}_p(\mathcal{T}^\pi \bm{\mu}_1, \mathcal{T}^\pi \bm{\mu}_2) \leq \gamma \bar{d}_p(\bm{\mu}_1, \bm{\mu}_2).
    \end{equation}
\end{theorem}

\begin{proof}
    We have $\bar{d}_p(\mathcal{T}^\pi \bm{\mu}_1, \mathcal{T}^\pi \bm{\mu}_2) = \sup_{s_t,a_t} W_p(\mathcal{T}^\pi \bm{\mu}_1(s_t,a_t), \mathcal{T}^\pi \bm{\mu}_2(s_t,a_t))$. 

    For each $(s_t,a_t)$, we let $s_{t+1} \sim P(s_{t+1}|s_t,a_t), a_{t+1} \sim \pi(a_{t+1}|s_{t+1}), \bm{r}_t \sim \bm{R}(\bm{r}_t|s_t,a_t)$, $\bm{Z}_1 \sim \bm{\mu}_1(s_{t+1},a_{t+1})$ and $\bm{Z}_2 \sim \bm{\mu}_2(s_{t+1},a_{t+1})$. 
    
    By Lemma~\ref{lemma-2}, we have
    \begin{equation*}
    \begin{split}
        W_p(\mathcal{T}^\pi \bm{\mu}_1(s_t,a_t), \mathcal{T}^\pi \bm{\mu}_2(s_t,a_t)) = W_p(\text{Law}(f_{\gamma, \bm{r}_t}(\bm{Z}_1)), \text{Law}(f_{\gamma, \bm{r}_t}(\bm{Z}_2)))
    \end{split}
    \end{equation*}
    By Lemma~\ref{lemma-3}, 
    \begin{equation*}
        W_p(\text{Law}(f_{\gamma, \bm{r}}(\bm{Z}_1)), \text{Law}(f_{\gamma, \bm{r}}(\bm{Z}_2))) \leq \gamma W_p(\text{Law}(\bm{Z}_1), \text{Law}(\bm{Z}_2))
    \end{equation*}
    for each constant $\bm{r} \in \mathbb{R}^N$. 
    By Lemma~\ref{lemma-4}, when $\bm{r}_t \sim \bm{R}(\bm{r}_t|s_t,a_t)$, we have
    \begin{equation*}
    \begin{split}
        W_p(\text{Law}(f_{\gamma, \bm{r}_t}(\bm{Z}_1)), \text{Law}(f_{\gamma, \bm{r}_t}(\bm{Z}_2))) \leq \gamma W_p(\text{Law}(\bm{Z}_1), \text{Law}(\bm{Z}_2)).
    \end{split}
    \end{equation*}
    Since for each $(s_{t+1},a_{t+1})$, 
    \begin{equation*}
        W_p(\bm{\mu}_1(s_{t+1},a_{t+1}), \bm{\mu}_2(s_{t+1},a_{t+1})) \leq \sup_{s,a} W_p(\bm{\mu}_1(s,a), \bm{\mu}_2(s,a)),
    \end{equation*}
    by Lemma~\ref{lemma-4}, when $s_{t+1} \sim P(s_{t+1}|s_t,a_t), a_{t+1} \sim \pi(a_{t+1}|s_{t+1})$, we have
    \begin{equation*}
        W_p(\text{Law}(\bm{Z}_1), \text{Law}(\bm{Z}_2)) \leq \sup_{s,a} W_p(\bm{\mu}_1(s,a), \bm{\mu}_2(s,a))
    \end{equation*}
    where $\bm{Z}_1 \sim \bm{\mu}_1(s_{t+1},a_{t+1})$ and $\bm{Z}_2 \sim \bm{\mu}_2(s_{t+1},a_{t+1})$. 
    
    Combining all the above derivations, we have
    \begin{equation*}
    \begin{split}
        W_p(\mathcal{T}^\pi \bm{\mu}_1(s_t,a_t), \mathcal{T}^\pi \bm{\mu}_2(s_t,a_t)) &= W_p(\text{Law}(f_{\gamma, \bm{r}_t}(Z_1)), \text{Law}(f_{\gamma, \bm{r}_t}(Z_2))) \\
        &\leq \gamma W_p(\text{Law}(\bm{Z}_1), \text{Law}(\bm{Z}_2)) \\
        &\leq \gamma \sup_{s,a} W_p(\bm{\mu}_1(s,a), \bm{\mu}_2(s,a)),
    \end{split}
    \end{equation*}
    and 
    \begin{equation*}
    \begin{split}
        \bar{d}_p(\mathcal{T}^\pi \bm{\mu}_1, \mathcal{T}^\pi \bm{\mu}_2) &= \sup_{s_t,a_t} W_p(\mathcal{T}^\pi \bm{\mu}_1(s_t,a_t), \mathcal{T}^\pi \bm{\mu}_2(s_t,a_t)) \\
        &\leq \gamma \sup_{s,a} W_p(\bm{\mu}_1(s,a), \bm{\mu}_2(s,a)) = \gamma \bar{d}_p(\bm{\mu}_1, \bm{\mu}_2). 
    \end{split}
    \end{equation*}
\end{proof}

\begin{lemma} \label{lemma-5}
    $\bm{\mu}^\pi$ is the fixed-point of the operator $\mathcal{T}^\pi$. 
\end{lemma}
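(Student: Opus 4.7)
The plan is to unroll the definition of $\bm{Z}^\pi(s,a)$ by one time step and recognize the resulting law as exactly what Lemma~\ref{lemma-2} produces for $\mathcal{T}^\pi \bm{\mu}^\pi(s,a)$. Concretely, I would first apply Lemma~\ref{lemma-2} with $\bm{\mu} = \bm{\mu}^\pi$ to obtain $\mathcal{T}^\pi \bm{\mu}^\pi(s,a) = \text{Law}(\gamma \bm{Z} + \bm{r})$, where $\bm{r} \sim \bm{R}(\cdot \mid s,a)$, $s' \sim P(\cdot \mid s,a)$, $a' \sim \pi(\cdot \mid s')$, and $\bm{Z} \sim \bm{\mu}^\pi(s',a')$ drawn conditionally independently of $\bm{r}$ given $(s',a')$. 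The task then reduces to exhibiting $\bm{Z}^\pi(s,a)$ as $\gamma \bm{Z} + \bm{r}$ for random variables with exactly this joint distribution.

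Next, I would use the infinite-sum definition of $\bm{Z}^\pi$ to split off the first reward, writing $\bm{Z}^\pi(s,a) = \bm{r}_0 + \gamma \sum_{t=0}^\infty \gamma^t \bm{r}_{t+1}$ with $s_0 = s$, $a_0 = a$, and the remaining variables generated by the MDP and $\pi$. The key identification is that, conditionally on $(s_1, a_1)$, the shifted trajectory $(s_{1+t}, a_{1+t}, \bm{r}_{1+t})_{t \geq 0}$ has the same joint law as a fresh trajectory started at $(s_1, a_1)$, by the Markov property of $P$ together with time-homogeneity of $\pi$. Hence the tail $\sum_{t \geq 0} \gamma^t \bm{r}_{1+t}$ conditioned on $(s_1, a_1)$ has law $\bm{\mu}^\pi(s_1, a_1)$, and it is conditionally independent of $\bm{r}_0$ given $(s_1, a_1)$ since future rewards depend only on the Markov state $s_1$.

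Putting the pieces together, the joint law of $(\bm{r}_0, s_1, a_1, \sum_{t \geq 0} \gamma^t \bm{r}_{1+t})$ matches the joint law of $(\bm{r}, s', a', \bm{Z})$ prescribed by Lemma~\ref{lemma-2}. Applying the measurable map $(u, v) \mapsto u + \gamma v$ to the first and fourth coordinates on both sides gives $\bm{Z}^\pi(s,a) \stackrel{d}{=} \bm{r} + \gamma \bm{Z}$, and taking laws yields $\bm{\mu}^\pi(s,a) = \mathcal{T}^\pi \bm{\mu}^\pi(s,a)$, proving the lemma.

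The main obstacle is the measure-theoretic step identifying the tail-sum law via the Markov shift. Aside from checking that the infinite series is almost surely well-defined (which holds under the standing boundedness of $\bm{R}$ together with $\gamma \in (0,1)$), the equality of conditional laws is a direct application of the Markov property to a bounded measurable functional of the post-$(s_1, a_1)$ trajectory. Once this step is in hand, everything else is routine bookkeeping with pushforwards and conditional independence.
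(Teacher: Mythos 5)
Your proposal is correct and follows essentially the same route as the paper's proof: apply Lemma~\ref{lemma-2} with $\bm{\mu}=\bm{\mu}^\pi$, unroll $\bm{Z}^\pi(s,a)$ by one time step, and identify the tail sum's conditional law as $\bm{\mu}^\pi(s_1,a_1)$ via time-homogeneity of the MDP. You merely make explicit the Markov-shift and conditional-independence bookkeeping that the paper handles by directly substituting the return's law into the expression from Lemma~\ref{lemma-2}.
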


\begin{proof}
    By definition, 
    \begin{equation*}
    \begin{split}
        \bm{\mu}^\pi(s,a) &= \text{Law}(\sum_{t=0}^\infty \gamma^t \bm{r}_t), \\
    \text{ where } s_0 &= s, a_0 = a, \bm{r}_t \sim R(\cdot|s_t, a_t), s_{t+1} \sim P(\cdot|s_t, a_t), a_{t+1} \sim \pi(\cdot|s_{t+1}).
    \end{split}
    \end{equation*}
    which is equivalent to
    \begin{align}
        \bm{\mu}^\pi(s_t,a_t) &= \text{Law}(\sum_{\tau=t}^\infty \gamma^{\tau-t} \bm{r}_\tau), \label{eq-31} \\
    \text{ where } \bm{r}_\tau &\sim R(\cdot|s_\tau, a_\tau), s_{\tau+1} \sim P(\cdot|s_\tau, a_\tau), a_{\tau+1} \sim \pi(\cdot|s_{\tau+1}) \text{ for all } \tau \geq t. \nonumber
    \end{align}
    By Lemma~\ref{lemma-2}, we have
    \begin{equation*}
    \begin{split}
        \mathcal{T}^\pi \bm{\mu}^\pi(s_t,a_t) = \text{Law}(\gamma \bm{Z} + \bm{r}_t)
    \end{split}
    \end{equation*}
    where $s_{t+1} \sim P(s_{t+1}|s_t,a_t), a_{t+1} \sim \pi(a_{t+1}|s_{t+1}), \bm{r}_t \sim \bm{R}(\bm{r}_t|s_t,a_t)$, and $\bm{Z} \sim \bm{\mu}^\pi(s_{t+1},a_{t+1})$. 
    
    Substituting $\bm{\mu}^\pi(s_{t+1},a_{t+1})$ by equation~\eqref{eq-31}, we have
    \begin{equation*}
    \begin{split}
        \mathcal{T}^\pi \bm{\mu}^\pi(s_t,a_t) = \text{Law}(\gamma \sum_{\tau=t+1}^\infty \gamma^{\tau-(t+1)} \bm{r}_\tau + \bm{r}_t) = \text{Law}(\sum_{\tau=t}^\infty \gamma^{\tau-t} \bm{r}_\tau)
    \end{split}
    \end{equation*}
    where $\bm{r}_\tau \sim R(\cdot|s_\tau, a_\tau), s_{\tau+1} \sim P(\cdot|s_\tau, a_\tau), a_{\tau+1} \sim \pi(\cdot|s_{\tau+1})$ for all $\tau \geq t$.
    
    We then proved that $\mathcal{T}^\pi \bm{\mu}^\pi = \bm{\mu}^\pi$. 
\end{proof}

\begin{corollary}
    If $\bm{\mu}_{i+1} = \mathcal{T}^\pi \bm{\mu}_i $, then as $i \rightarrow \infty$, $\bm{\mu}_i \rightarrow \bm{\mu}^\pi$. 
\end{corollary}

\begin{proof}
    Since $\mathcal{T}^\pi$ is a contraction in $\bar{d}_p$, by Banach's fixed point theorem, there is one and only one fixed point of $\mathcal{T}^\pi$, and as $i \rightarrow \infty$, $\bm{\mu}_i$ converges to this fixed point. By Lemma~\ref{lemma-5}, $\bm{\mu}^\pi$ is the only fixed point of $\mathcal{T}^\pi$, and as $i \rightarrow \infty$, $\bm{\mu}_i \rightarrow \bm{\mu}^\pi$. 
\end{proof}

\begin{lemma} \label{lemma-6}
    We have
    \begin{equation}
        \mathcal{T}\bm{\mu}(s_t,a_t) = \text{Law}(f_{\gamma,\bm{r}_t}(\bm{Z})),
    \end{equation}
    where $\bm{r}_t \sim \bm{R}(\bm{r}_t|s_t,a_t)$, $s_{t+1} \sim P(s_{t+1}|s_t,a_t)$, $a'=\argmax_{a}\mathbb{E}_{\bm{Z}\sim\bm{\mu}(s_{t+1},a)}\sum_{n=1}^N Z_n$, and $\bm{Z} \sim \bm{\mu}(s_{t+1},a')$. 
\end{lemma}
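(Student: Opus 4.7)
The plan is to mirror the structure of Lemma~\ref{lemma-2}, but adapted to the optimality operator, where the key difference is that the policy integration $\int_{\mathcal{A}}\pi(da_{t+1}|s_{t+1})$ is replaced by evaluation at the deterministic greedy action $a'(s_{t+1}) = \argmax_a \mathbb{E}_{\bm{Z}\sim\bm{\mu}(s_{t+1},a)}\sum_{n=1}^N Z_n$. So the proof collapses one layer of integration relative to Lemma~\ref{lemma-2}, but otherwise follows the same pushforward calculation.

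First I would fix an arbitrary Borel set $A\subseteq\mathbb{R}^N$ and compute $P(f_{\gamma,\bm{r}_t}(\bm{Z})\in A)$ where $\bm{r}_t,s_{t+1},\bm{Z}$ are drawn as in the statement. Conditioning on $s_{t+1}$ and $\bm{r}_t$, the random variable $\bm{Z}$ is distributed according to $\bm{\mu}(s_{t+1},a'(s_{t+1}))$, so by the tower property
\begin{equation*}
P(f_{\gamma,\bm{r}_t}(\bm{Z})\in A) = \int_{\mathcal{S}}\int_{\mathbb{R}^N} P\bigl(f_{\gamma,\bm{r}_t}(\bm{Z}(s_{t+1},a'))\in A\bigr)\,R(d\bm{r}_t|s_t,a_t)\,P(ds_{t+1}|s_t,a_t),
\end{equation*}
where $\text{Law}(\bm{Z}(s_{t+1},a')) = \bm{\mu}(s_{t+1},a')$. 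Then I would invoke Lemma~\ref{lemma-1} to identify $P(f_{\gamma,\bm{r}_t}(\bm{Z}(s_{t+1},a'))\in A)$ with $(f_{\gamma,\bm{r}_t})_{\#}\bm{\mu}(s_{t+1},a')(A)$, so the right-hand side matches the defining integral of $\mathcal{T}\bm{\mu}(s_t,a_t)$ in equation~\eqref{eq-optimal-op} evaluated on $A$. Since the equality holds for every Borel $A$, the two laws coincide.

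The only genuinely new issue, compared to Lemma~\ref{lemma-2}, is justifying that replacing the integral over $a_{t+1}\sim\pi(\cdot|s_{t+1})$ with a point evaluation at $a'(s_{t+1})$ is legitimate. The hard part will be ensuring that $a'$ is a well-defined measurable function of $s_{t+1}$, so that $\bm{\mu}(s_{t+1},a'(s_{t+1}))$ is a valid conditional law and the outer integrals are well defined. Because $\mathcal{A}$ is finite and $s\mapsto \mathbb{E}_{\bm{Z}\sim\bm{\mu}(s,a)}\sum_n Z_n$ is measurable for each fixed $a$, a standard measurable-selection argument (breaking ties by a fixed ordering on $\mathcal{A}$) resolves this. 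Everything else reduces to the same pushforward/Fubini manipulations already used in Lemma~\ref{lemma-2}, so the proof is short once this point is in hand.
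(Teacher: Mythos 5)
Your proposal is correct and follows essentially the same route as the paper's proof: condition on $\bm{r}_t$ and $s_{t+1}$, apply Lemma~1 to identify the conditional law of $f_{\gamma,\bm{r}_t}(\bm{Z}(s_{t+1},a'))$ with the pushforward $(f_{\gamma,\bm{r}_t})_{\#}\bm{\mu}(s_{t+1},a')$, and match the resulting mixture integral with the definition of $\mathcal{T}\bm{\mu}(s_t,a_t)$ for every Borel set $A$. Your added remark on the measurability of the greedy selection $s_{t+1}\mapsto a'(s_{t+1})$ (handled by finiteness of $\mathcal{A}$ and a fixed tie-breaking rule) is a point the paper leaves implicit, but it does not change the argument.
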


\begin{proof}
    For all Borel sets $A$, we have
    \begin{equation*}
    \begin{split}
        &\ \ \ \ P(f_{\gamma,\bm{r}_t}(\bm{Z}) \in A) \\
        &= \int_{\mathbb{R}^N} P(f_{\gamma,\bm{r}_t}(\bm{Z}) \in A) \bm{R}(d\bm{r}_t|s_t,a_t)  \\
        &= \int_{\mathcal{S}} \int_{\mathbb{R}^N} P(f_{\gamma,\bm{r}_t}(\bm{Z}(s_{t+1}, a')) \in A) \bm{R}(d\bm{r}_t|s_t,a_t) P(ds_{t+1}|s_t,a_t) 
    \end{split}
    \end{equation*}
    where $a'=\argmax_{a}\mathbb{E}_{\bm{Z}\sim\bm{\mu}(s_{t+1},a)}\sum_{n=1}^N Z_n$. 
    
    By Lemma~\ref{lemma-1}, 
    \begin{equation*}
        \text{Law}(f_{\gamma,\bm{r}_t}(\bm{Z}(s_{t+1}, a'))) = f_{\gamma,\bm{r}_t\#}\bm{\mu}(s_{t+1},a'),
    \end{equation*}
    it follows that
    \begin{equation*}
    \begin{split}
        P(f_{\gamma,\bm{r}_t}(\bm{Z}) \in A) = \int_{\mathcal{S}} \int_{\mathbb{R}^N} f_{\gamma,\bm{r}_t\#}\bm{\mu}(s_{t+1},a')(A) \bm{R}(d\bm{r}_t|s_t,a_t) P(ds_{t+1}|s_t,a_t) 
    \end{split}
    \end{equation*}
    for all Borel sets $A$, and 
    \begin{equation*}
        \text{Law}(f_{\gamma,\bm{r}_t}(\bm{Z})) = \int_{\mathcal{S}} \int_{\mathbb{R}^N} f_{\gamma,\bm{r}_t\#}\bm{\mu}(s_{t+1},a')  \bm{R}(d\bm{r}_t|s_t,a_t) P(ds_{t+1}|s_t,a_t) = \mathcal{T}\bm{\mu}(s_t,a_t)
    \end{equation*}
\end{proof}

\begin{theorem}
    For two joint distributions $\bm{\mu}_1$ and $\bm{\mu}_2$, 
    we have
    \begin{equation}
        \vert \vert (\mathbb{E}_{\sum}) (\mathcal{T} \bm{\mu}_1) - (\mathbb{E}_{\sum}) (\mathcal{T} \bm{\mu}_2) \vert \vert_{\infty} \leq \gamma \vert \vert (\mathbb{E}_{\sum})\bm{\mu}_{1} - (\mathbb{E}_{\sum})\bm{\mu}_{2} \vert \vert,
    \end{equation}
\end{theorem}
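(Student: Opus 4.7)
The plan is to reduce this theorem to the classical contraction of the scalar Bellman optimality operator for $Q$-functions, by first showing that the operator $\mathbb{E}_{\sum}$ commutes in the right way with $\mathcal{T}$.

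First I would invoke Lemma~\ref{lemma-6} to rewrite $\mathcal{T}\bm{\mu}(s_t,a_t)$ as $\text{Law}(f_{\gamma,\bm{r}_t}(\bm{Z}))$, where $\bm{r}_t \sim R(\cdot|s_t,a_t)$, $s_{t+1}\sim P(\cdot|s_t,a_t)$, $\bm{Z}\sim \bm{\mu}(s_{t+1},a')$, and $a'=\argmax_a \mathbb{E}_{\bm{Z}\sim \bm{\mu}(s_{t+1},a)}\sum_n Z_n = \argmax_a (\mathbb{E}_{\sum})\bm{\mu}(s_{t+1},a)$. Then applying $\mathbb{E}_{\sum}$ and using linearity of expectation along with the definition of $f_{\gamma,\bm{r}_t}$, I get
\begin{equation*}
(\mathbb{E}_{\sum})(\mathcal{T}\bm{\mu})(s_t,a_t) = \bar{r}(s_t,a_t) + \gamma\,\mathbb{E}_{s_{t+1}}\!\left[\max_{a}(\mathbb{E}_{\sum})\bm{\mu}(s_{t+1},a)\right],
\end{equation*}
where $\bar{r}(s_t,a_t) := \mathbb{E}_{\bm{r}_t\sim R(\cdot|s_t,a_t)}\sum_n r_{t,n}$. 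This identifies $(\mathbb{E}_{\sum})\mathcal{T}$ with the classical scalar Bellman optimality operator acting on the scalar function $(\mathbb{E}_{\sum})\bm{\mu}$.

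Next I would subtract the two expressions for $\bm{\mu}_1$ and $\bm{\mu}_2$, noting that the reward term $\bar{r}(s_t,a_t)$ cancels, and then bound pointwise using the standard $|\max_a f(a)-\max_a g(a)|\leq \max_a|f(a)-g(a)|$ inequality, followed by pulling the absolute value inside the expectation over $s_{t+1}$. Taking a sup over $(s_t,a_t)$ on the outside yields
\begin{equation*}
\|(\mathbb{E}_{\sum})\mathcal{T}\bm{\mu}_1 - (\mathbb{E}_{\sum})\mathcal{T}\bm{\mu}_2\|_\infty \leq \gamma \sup_{s',a}\bigl|(\mathbb{E}_{\sum})\bm{\mu}_1(s',a)-(\mathbb{E}_{\sum})\bm{\mu}_2(s',a)\bigr| = \gamma\|(\mathbb{E}_{\sum})\bm{\mu}_1 - (\mathbb{E}_{\sum})\bm{\mu}_2\|_\infty.
\end{equation*}

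The only subtle step is the first one: verifying that $(\mathbb{E}_{\sum})\mathcal{T}\bm{\mu}$ really does collapse into the scalar Bellman optimality operator applied to $(\mathbb{E}_{\sum})\bm{\mu}$. This uses two facts — that the argmax in the definition of $\mathcal{T}$ is exactly the argmax appearing in the scalar Bellman operator once we sum dimensions, and that expectation commutes with both the affine pushforward $f_{\gamma,\bm{r}}$ and the summation over $n$. Once that identification is clean, the contraction follows from the well-known argument, and no new ideas beyond those in Theorem~1 are needed.
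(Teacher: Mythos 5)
Your proposal is correct and follows essentially the same route as the paper: both reduce the claim to the identity $(\mathbb{E}_{\sum})(\mathcal{T}\bm{\mu}) = \mathcal{T}_E(\mathbb{E}_{\sum})\bm{\mu}$ (via Lemma~\ref{lemma-6} and linearity of expectation through the affine map $f_{\gamma,\bm{r}}$) and then invoke the contraction of the scalar Bellman optimality operator in the $\infty$-norm. The only difference is that you spell out the standard $\max$-difference argument where the paper simply cites the known contraction of $\mathcal{T}_E$, which is a matter of exposition rather than substance.
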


\begin{proof}
    We first prove that 
    \begin{equation}
    \label{eq-34}
        (\mathbb{E}_{\sum}) (\mathcal{T} \bm{\mu}) = \mathcal{T}_E (\mathbb{E}_{\sum})\bm{\mu}
    \end{equation}
    where $\mathcal{T}_E$ is the Bellman optimality operator over scalar values. 
    
    We have
    \begin{equation*}
        (\mathbb{E}_{\sum}) (\mathcal{T} \bm{\mu})(s_t,a_t) = \mathbb{E}_{\bm{Z} \sim \bm{\mu}(s_{t+1},a')}\sum_{n=1}^N (f_{\gamma,\bm{r}_t}(\bm{Z} ))_n
    \end{equation*}
    where $\bm{r}_t \sim \bm{R}(\bm{r}_t|s_t,a_t)$, $s_{t+1} \sim P(s_{t+1}|s_t,a_t)$, and $a'=\argmax_{a}\mathbb{E}_{\bm{Z}\sim\bm{\mu}(s_{t+1},a)}\sum_{n=1}^N Z_n$. 
    
    It follows that
    \begin{equation*}
    \begin{split}
        (\mathbb{E}_{\sum}) (\mathcal{T} \bm{\mu})(s_t,a_t) &= \sum_{n=1}^N (r_t)_n + \gamma \mathbb{E}_{\bm{Z} \sim \bm{\mu}(s_{t+1},a')} \sum_{n=1}^N Z_n \\
        &= r_t + \gamma \max_{a'} \mathbb{E}_{\bm{Z} \sim \bm{\mu}(s_{t+1},a')} \sum_{n=1}^N Z_n \\
        &= r_t + \gamma \max_{a'} (\mathbb{E}_{\sum}) \bm{\mu}(s_{t+1},a') \\ 
        &= \mathcal{T}_E (\mathbb{E}_{\sum}) \bm{\mu}(s_t,a_t)
    \end{split}
    \end{equation*}
    
    Then we have
    \begin{equation*}
    \begin{split}
        \vert \vert (\mathbb{E}_{\sum}) (\mathcal{T} \bm{\mu}_1) - (\mathbb{E}_{\sum}) (\mathcal{T} \bm{\mu}_2) \vert \vert_{\infty} &= \vert \vert  \mathcal{T}_E (\mathbb{E}_{\sum}) \bm{\mu}_1 - \mathcal{T}_E (\mathbb{E}_{\sum}) \bm{\mu}_2 \vert \vert_{\infty} \\
        &\leq \gamma \vert \vert  (\mathbb{E}_{\sum}) \bm{\mu}_1 - (\mathbb{E}_{\sum}) \bm{\mu}_2 \vert \vert_{\infty}
    \end{split}
    \end{equation*}
\end{proof}

\begin{corollary}
    If $\bm{\mu}_{i+1} = \mathcal{T} \bm{\mu}_i $, then as $i \rightarrow \infty$, $\mathbb{E}_{\bm{Z} \sim \bm{\mu}_i(s,a)} \sum_{n=1}^N Z_n \rightarrow Q^*(s,a)$ for all $(s,a)$.
\end{corollary}

\begin{proof}
    Since $\mathcal{T}_E$ is a contraction on $\infty$-norm and $Q^*$ is the fixed point of $\mathcal{T}_E$, by Banach's fixed point theorem, $Q^*$ is the only fixed point of $\mathcal{T}_E$, and $\mathcal{T}_E^i (\mathbb{E}_{\sum}) \bm{\mu}_0 \rightarrow Q^*$ as $i \rightarrow \infty$. 
    
    By equation~\eqref{eq-34}, $(\mathbb{E}_{\sum}) \bm{\mu}_i = \mathcal{T}_E (\mathbb{E}_{\sum}) \bm{\mu}_{i-1} = \cdots = \mathcal{T}_E^{i} (\mathbb{E}_{\sum}) \bm{\mu}_0$. 
    
    Therefore, as $i \rightarrow \infty$, $(\mathbb{E}_{\sum}) \bm{\mu}_i \rightarrow Q^*$, and $\mathbb{E}_{\bm{Z} \sim \bm{\mu}_i(s,a)} \sum_{n=1}^N Z_n \rightarrow Q^*(s,a)$ for all $(s,a)$. 
\end{proof}

\subsection{Environment Settings and Hyperparameter Settings}

\paragraph{Atari environment settings. } The environment settings for Atari environments are shown in Table~\ref{table-env-settings}. We use six games on Atari: AirRaid, Asteroids, Pong, MsPacman, Gopher and UpNDown. The reward functions for these environments are set as follows. 
\begin{itemize}
    \item For AirRaid, the agent gets multi-dimensional reward $[100, 0, 0, 0]$, $[0, 75, 0, 0]$, $[0, 0, 50, 0]$, $[0, 0, 0, 25]$, $[0, 0, 0, 0]$ respectively for the primitive reward $100$, $75$, $50$, $25$ and $0$; 
    \item For Pong, if the primitive reward for the agent is $-1$, the agent gets a multi-dimensional reward $[-1, 0]$; if the primitive reward for the agent is $1$, the agent gets a multi-dimensional reward $[0, 1]$; otherwise, the agent gets a multi-dimensional reward $[0, 0]$. 
    \item For Asteroids, we denote the primitive reward as $r$, and denote the multi-dimensional reward as $[r_1, r_2, r_3]$. If $(r - 20) \mod 50 = 0$, we let $r_1 = 20$, otherwise $r_1 = 0$. If $(r - r_1 - 50) \mod 100 = 0$, we let $r_2 = 50$, otherwise $r_2 = 0$. We let $r_3 = r - r_1 - r_2$. 
    \item For MsPacman, we denote the primitive reward as $r$, and denote the multi-dimensional reward as $[r_1, r_2, r_3, r_4]$. If $(r - 10) \mod 50 = 0$, we let $r_1 = 10$, otherwise $r_1 = 0$. If $(r - r_1 - 50) \mod 100 = 0$, we let $r_2 = 50$, otherwise $r_2 = 0$. If $(r - r_1 - r_2 - 100) \mod 200 = 0$, we let $r_3 = 100$, otherwise $r_3 = 0$. We let $r_4 = r - r_1 - r_2 - r_3$. 
    \item For Gopher, we denote the primitive reward as $r$, and denote the multi-dimensional reward as $[r_1, r_2]$. If $(r - 20) \mod 100 = 0$, we let $r_1 = 20$, otherwise $r_1 = 0$. We let $r_2 = r - r_1$.  
    \item For UpNDown, we denote the primitive reward as $r$, and denote the multi-dimensional reward as $[r_1, r_2, r_3]$. If $(r - 10) \mod 100 = 0$, we let $r_1 = 10$, otherwise $r_1 = 0$. If $(r - r_1 - 100) \mod 200 = 0$, we let $r_2 = 100$, otherwise $r_2 = 0$. We let $r_3 = r - r_1 - r_2$. 
\end{itemize}

\paragraph{Maze environment settings. } The environment settings for maze environments are shown in Table~\ref{table-env-settings}. The observations for maze are $84 \times 84 \times 3$ RGB images. The reward functions for three maze environments are set as follows. 

\begin{itemize}
    \item For ``maze-exclusive'' and ``maze-identical'' environment, each position with a green square awards the agent for $r \sim U(0.2, 0.6)$ in the first reward source; each position with a red square awards the agent for $r \sim U(0.4, 0.9)$ in the second reward source. 
    \item For ``maze-multireward'' environment, the orange square awards the agent for $r \sim U(0.2, 0.4)$ in the first reward source; the blue square awards the agent for $r \sim U(0.8, 1.0)$ in the second reward source; the green square awards the agent for $r \sim U(0.3, 0.5)$ in the third reward source; the red square awards the agent for $r \sim U(0.5, 0.7)$ in the fourth reward source. 
\end{itemize}

\begin{table}[H] 
  \centering
  \begin{tabular}{lll}
    \toprule
    Environment settings    & Values for Maze & Values for Atari  \\
    \midrule
    Stack size & 1 & 4 \\
    Frame skip & 1 & 4 \\
    One-frame observation shape & (84, 84, 3) & (84, 84) \\
    Agent's observation shape  & (84, 84, 3) & (84, 84, 4) \\
    $\gamma$ & 0.99 & 0.99  \\
    Reward clipping & --- & true \\
    Terminate on Life Loss & --- & true \\
    Sticky Actions & false & false \\
    \bottomrule
  \end{tabular}
  \caption{Environment settings for maze and Atari. }
  \label{table-env-settings}
\end{table}

\paragraph{MD3QN settings. } The hyper parameter settings for MD3QN is provided in Table \ref{table-MD3QN}. For the maze environment, we set the bandwidths to be $[0.01, 0.02, 0.04, 0.08, 0.16, 0.32, 0.48, 0.64, 0.80, 0.96]$; for Atari environment, we use the same bandwidth settings as MMDQN.

\begin{table}[H] 
  \centering
  \begin{tabular}{ll}
    \toprule
     Hyper-parameters    & Values for MD3QN  \\
    \midrule
    Learning Rate & 5e-5 \\
    Optimizer & Adam \\
    Squared bandwidth & $[2^{-8}, 2^{-7}, 2^{-6}, \cdots, 2^8]$ \\
    Network architecture & Refer to section 3.4 \\
    Number of particles ($M$)  & 200 \\
    $\epsilon$-train & Linear decay from $1$ to $0.01$ \\
    $\epsilon$-evaluation & 0.001 \\
    decay period for $\epsilon$ & 1M frames \\
    \bottomrule
  \end{tabular}
  \caption{Hyper-parameter settings for MD3QN }
  \label{table-MD3QN}
\end{table}

\subsection{Additional Experimental Results}

\subsubsection{Joint Distribution Modeling on Maze Environment}

Under the maze environment and policy evaluation setting, we visualize the joint distribution throughout the training process for the experiment in section 5.1, and the result is shown in Figure~\ref{fig-maze-joint-curve}. 

\begin{figure}
  \centering
  \hspace*{\fill}%
  \begin{subfigure}[b]{0.8\linewidth}
     \centering
     \includegraphics[height=6cm]{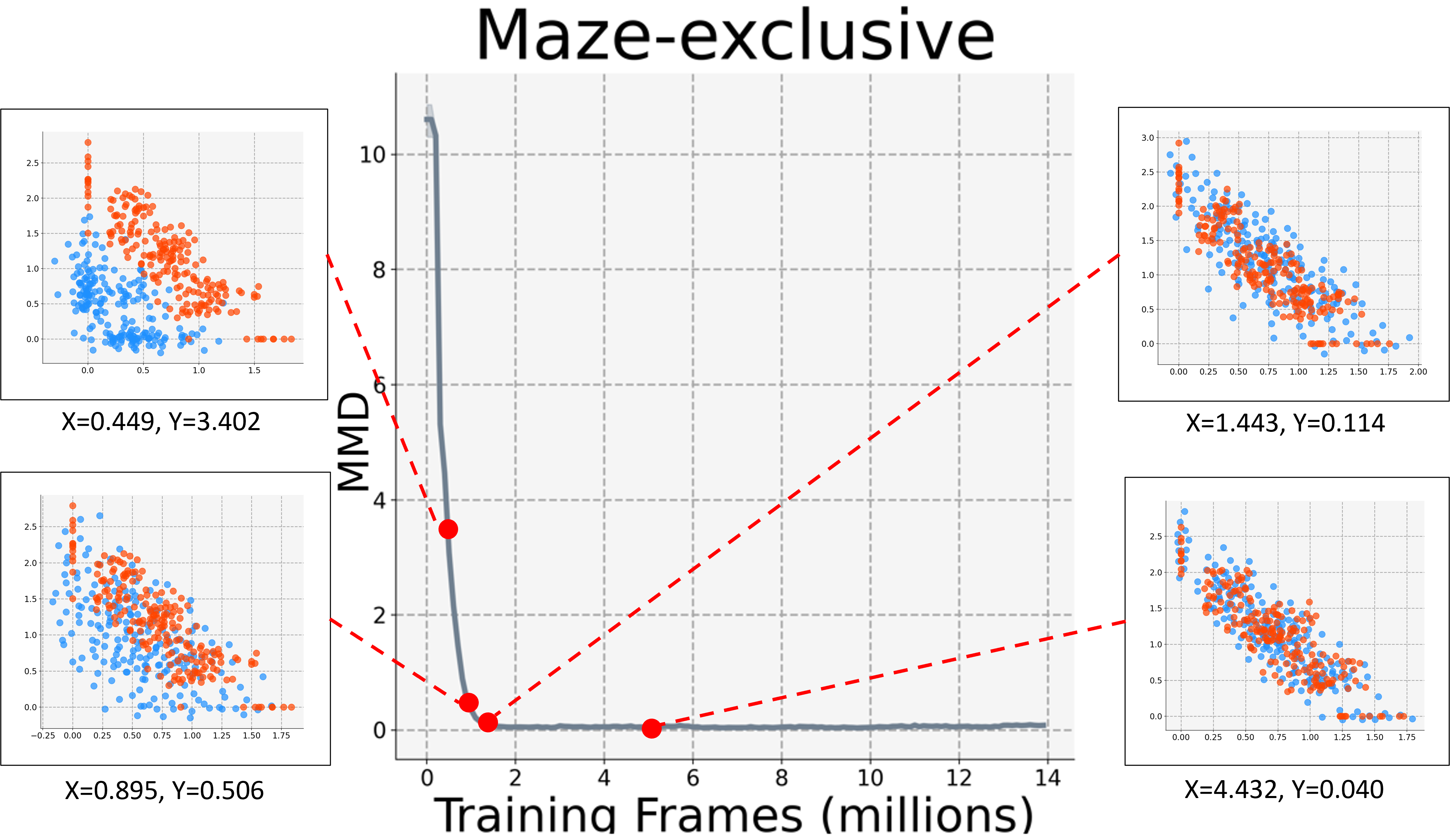}
     \caption{}
     \label{fig-maze-joint-curve-env1}
  \end{subfigure}
  \hspace*{\fill}
  
  \hspace*{\fill}
  \begin{subfigure}[b]{0.8\linewidth}
     \centering
     \includegraphics[height=6cm]{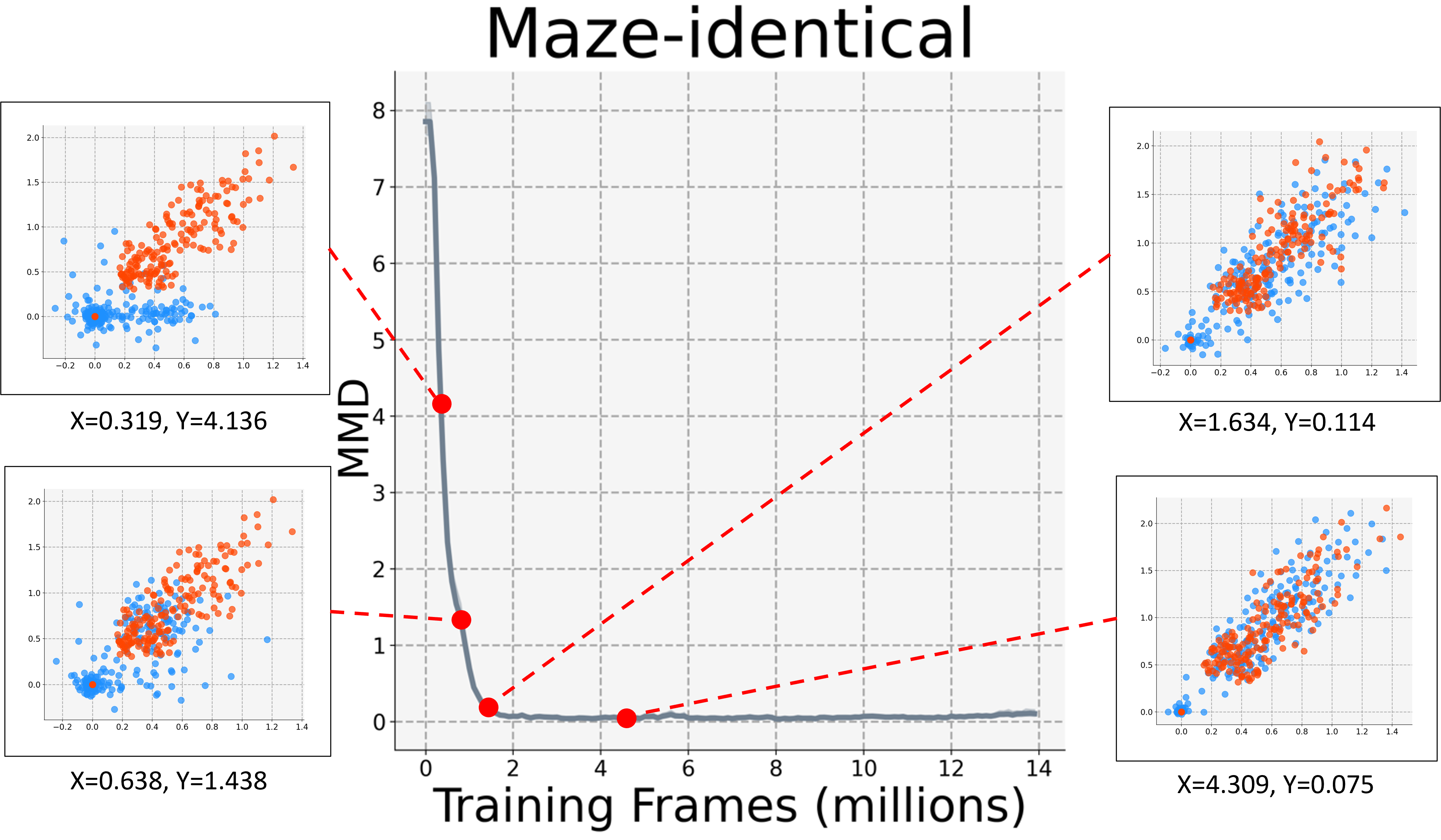}
     \caption{}
     \label{fig-maze-joint-curve-env2}
  \end{subfigure}
  \hspace*{\fill}
  
  \hspace*{\fill}
  \begin{subfigure}[b]{0.8\linewidth}
     \centering
     \includegraphics[height=6cm]{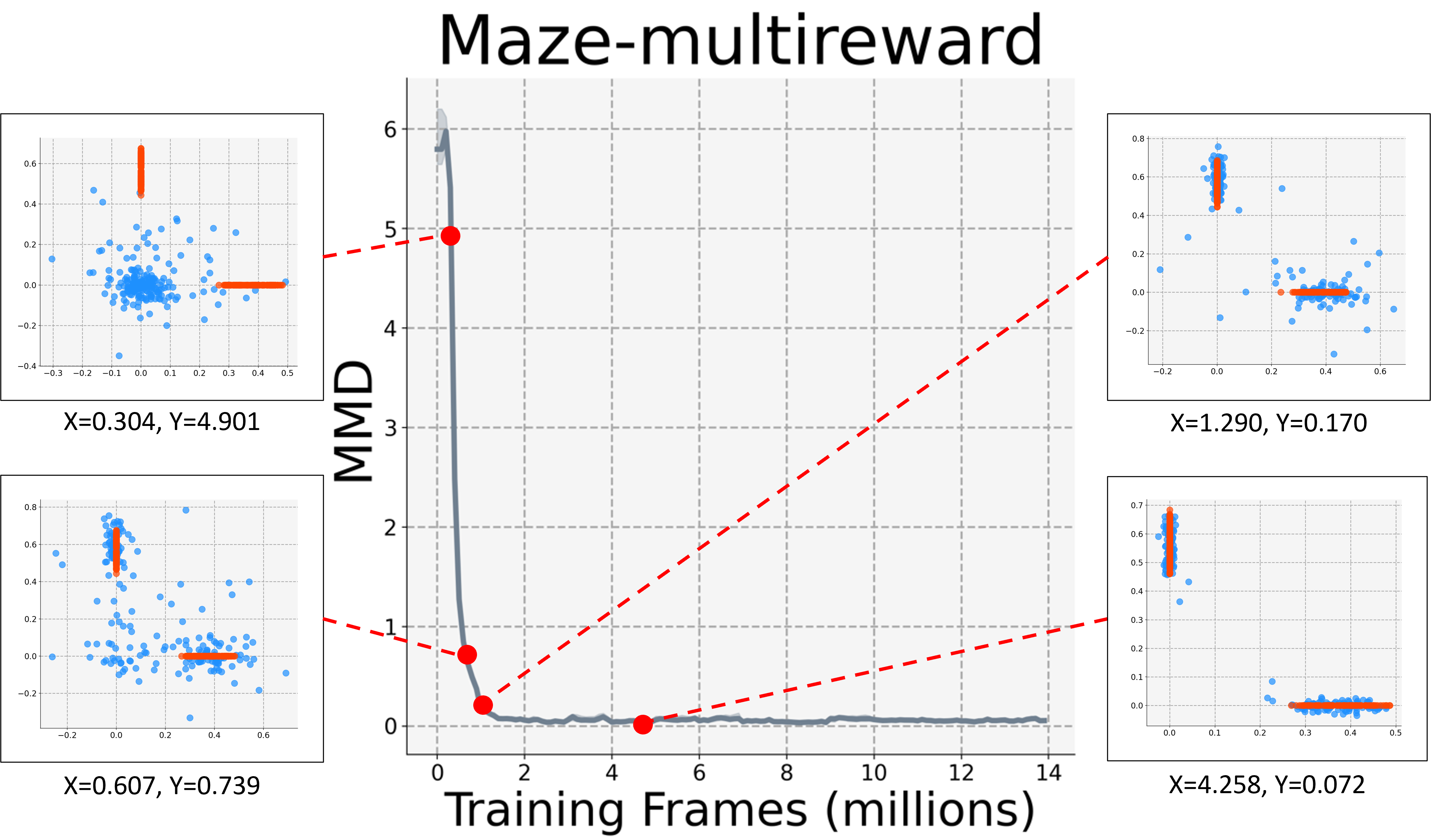}
     \caption{}
     \label{fig-maze-joint-curve-env3}
  \end{subfigure}
  \hspace*{\fill}%
  \caption{ Visualization of the modeled joint distribution throughout the training process under three maze environments. The y-axis represents the MMD metric between the modeled joint distribution (blue dots) and the true distribution $\bm{\mu}^\pi$ (red dots), which we use only for evaluation purpose. }
  \label{fig-maze-joint-curve}
\end{figure}

\subsubsection{Joint Distribution Modeling on Atari Games}

For the case studies for the modeled joint distributional by MD3QN on Atari games, we provide the results in Figure~\ref{fig-case-study-atari}. Overall, both the reward correlation and the reward scale are reasonably captured by MD3QN. In the aspect of reward correlation, Gopher's two sources of reward are positively correlated, since the agent needs to remove holes on the ground in order to kill the monster; UpNDown's two sources of reward are also positively correlated, since the reward for being alive and the reward for killing enemy cars have positive correlation. In the aspect of reward scale, the samples generated by MD3QN align with the scale of the true return distribution in all dimensions.  

\begin{figure}[]
  \centering
  \includegraphics[width=0.95\textwidth]{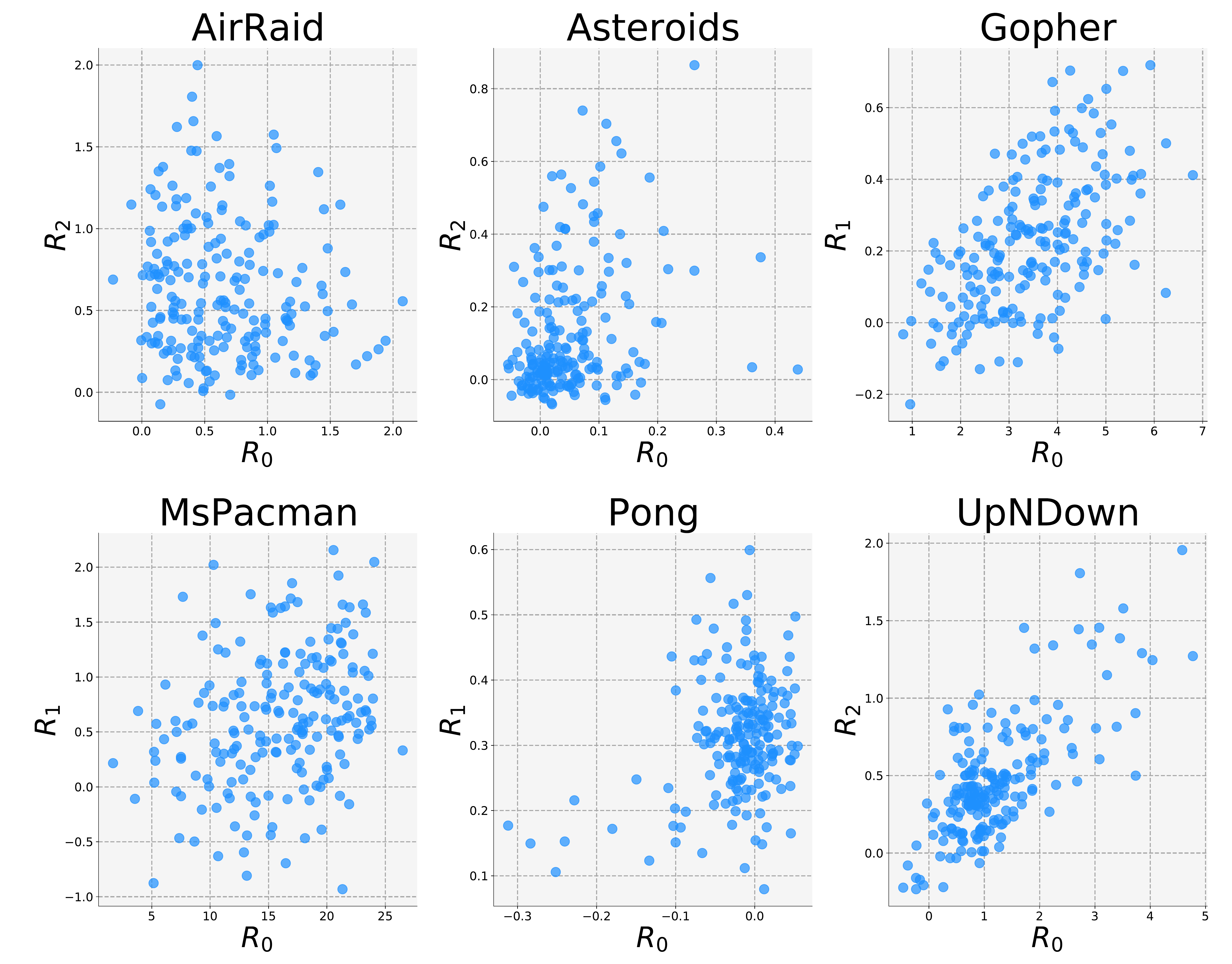}
  \caption{ Samples of joint distribution by MD3QN on six Atari games. For each game, we show the joint distribution for two dimensions of reward for visualization purpose.  }
  \label{fig-case-study-atari}
\end{figure}

\subsubsection{Downstream task for MD3QN: RL with multiple constraints}

We add another experiment to show the necessity of modeling the joint distribution, and explain why only modeling marginal distributions may fail in some settings (e.g., in environments with multiple correlated constraints) from theory and experiment results.

There are a number of real-world scenarios where multiple constraints should be met simultaneously. For instance, in autonomous driving, we need to balance the safety (distance from other cars), the speed, the comfort (the acceleration, etc.), and many other factors to make sure the car functions normally. Those statistics can be viewed as sub-rewards in our settings. For instance, we would like to make the agent be aware of the joint distributions of distance from other cars and the speed, and add a constraint on this distribution (e.g., $d > 1\text{m}$, $v < 50\text{km/h}$).

From the theoretical perspective, only the algorithm which is capable of modeling the joint distribution can find this optimal solution. If the algorithm can only model the marginal distributions, we can correctly compute the probability of simultaneously meeting multiple constraints only if they are independent. However, this is not common in real-world scenarios.

We use the same Maze environment as in our paper and modify the layout as a simplified analogue (see Figure~\ref{fig-constraint-env}). We set multiple constraints on the total return (in our experiment, we have three constraints: for the initial state, $\text{total red return} > 0.6$, total green return $> 0.6$, and total blue return $> 0.6$, where the agent gets a 1.0 reward of the specific color after collecting a specific colored block), and the goal is to find a policy which has the highest probability to satisfy all constraints. 

We use the joint distribution by MD3QN to achieve this: specifically, given the modeled joint distribution $\bm{\mu}(s,a)$, the agent can compute the probability to satisfy all the constraints in the joint distribution and take action by $\argmax_a \mathbb{P}_{\bm{Z} \sim \bm{\mu}(s,a)}(\bm{Z}\text{ satisfy all three constraints})$. We also modify the joint distributional Bellman optimality operator to maximize the probability to satisfy all constraints.

For baseline methods, we extend the MMDQN by multiple heads $\mu_{1:N}$ to reflect $N$ sources of rewards. This method is different from our algorithm in that only the marginal distribution of each dimension is learned. We test two ways to approximately compute the probability to satisfy all constraints by marginal distributions: the ``Marginal\_SUM'' baseline maximizes $\sum_{i=1}^N \mathbb{P}_{Z_i \sim \mu_i(s,a)}(Z_i \text{ satisfy }i\text{-th constraint})$, the sum of probabilities for each reward to satisfy constraint, while the ``Marginal\_PROD'' baseline maximizes $\Pi_{i=1}^N \mathbb{P}_{Z_i \sim \mu_i(s,a)}(Z_i \text{ satisfy }i\text{-th constraint})$, the product of probabilities for each reward to satisfy constraint.

The results is shown in Figure~\ref{fig-constraint-result}. It can be seen that maximizing the probability based on joint distribution can significantly outperform two baseline methods which both use marginal distribution information.

\begin{figure}
  \centering
  \hspace*{\fill}%
  \begin{subfigure}[b]{0.6\linewidth}
     \centering
     \includegraphics[height=6cm]{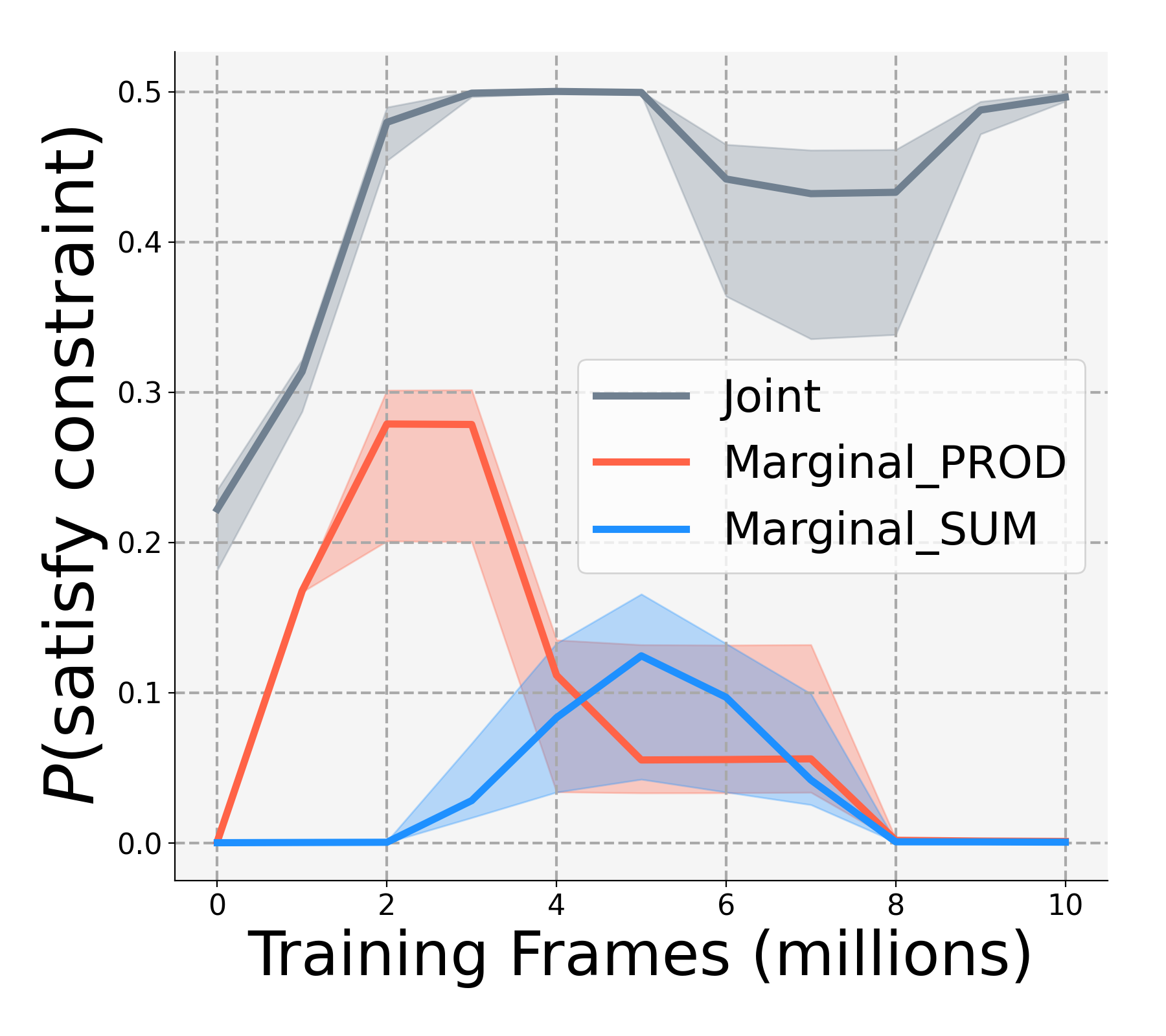}
     \caption{}
     \label{fig-constraint-result}
  \end{subfigure}
  \hfill
  \begin{subfigure}[b]{0.3\linewidth}
     \centering
     \includegraphics[trim={0 -1.5cm 0 0},height=4cm]{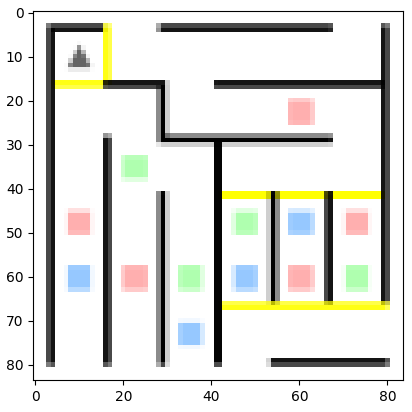}
     \caption{}
     \label{fig-constraint-env}
  \end{subfigure}
  \hspace*{\fill}%
  \hspace*{\fill}%
  \caption{ (a): maze environment with multiple constraints.  (b): probability of the agent to satisfy all constraints during evalution throughout the training process. }
  \label{fig-maze-env-rebuttal}
\end{figure}

\subsubsection{Effect of Network Architectures on Joint Distribution Modeling}

We study the impact of the choice of network architecture to the accuracy of joint return distribution modeling in MD3QN, and illustrate why we use the network architecture in section 3.4. 

We test the performance of several network architectures on policy evaluation setting at the beginning before fixing the architecture:
\begin{enumerate}
    \item The architecture used in the current paper. To be specific, we replace the final layer of DQN to output some deterministic samples of N-dimensional return for each action (see section 3.4).
    \item An IQN like architecture to model the joint distribution which multiplies the state features with the cosine embedding of a uniform noise signal, and outputs the multi-dimensional return samples. 
    \item A network architecture that concatenates a noise to the state feature, and outputs the deterministic samples.
\end{enumerate}

We perform the same experiment as in Section 5.1 and Figure~2 in our paper (under maze environments and policy evaluation setting) to test how MD3QN models the joint distribution under these three network architectures. The results is shown in Figure~\ref{fig-ablation-network}. 

For the first architecture, the MMD between prediction and ground truth distribution is 0.026, while for the second architecture, MMD is 1.541, and for the third architecture, MMD is 0.204. It can be concluded that the first one is the best in terms of modeling the joint distribution, and we adopt this architecture in our paper.

\begin{figure}[]
  \centering
  \includegraphics[width=0.9\textwidth]{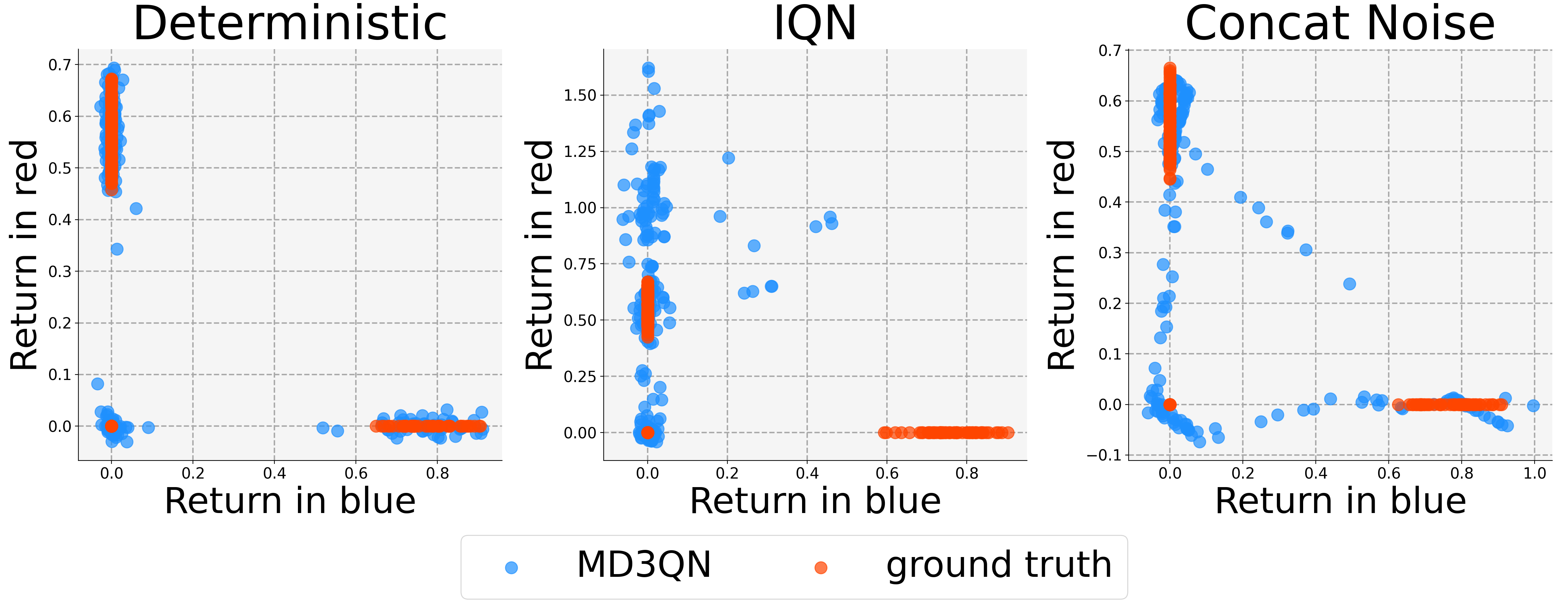}
  \caption{ Results for joint return distribution modeling under three network architectures. The experiment settings are the same as in Section 5.1. }
  \label{fig-ablation-network}
\end{figure}

\subsubsection{Effect of Bandwidth Selection on Joint Distribution Modeling}

We study the impact of the choice of bandwidth selection to the accuracy of joint return distribution modeling in MD3QN, and illustrate how we select the bandwidth for our experiments. 

We tested 3 sets of squared kernel bandwidths on policy evaluation setting: 
\begin{itemize}
    \item Squared bandwidths $W_1=[2^{-8},2^{-7},2^{-6},\cdots,2^8]$
    \item Squared bandwidths $W_2=[0.01,0.02,0.04,0.08,0.16,0.32,0.48,0.64,0.80,0.96 ,\\2.0,5.0,10.0,20.0,50.0,100.0]$
    \item Squared bandwidths $W_3=[1,2,3,4,5,6,7,8,9,10]$
\end{itemize}

 For any set of squared bandwidths $W = [w_1, w_2, \cdots, w_M]$, the kernel function is given by $k(x,x') = \sum_{i=1}^M \exp(-\frac{1}{w_i}\|x - x'\|_2^2)$, which is used to define the MMD metric. 
 
 We perform the same experiment as in Section 5.1 and Figure~2 in our paper (under maze environments and policy evaluation setting) to find how accurately MD3QN models the joint distribution under different bandwidth settings. The result can is shown in Figure~\ref{fig-ablation-bandwidth}. For squared bandwidths $W_1$, the MMD metric between prediction and ground truth distribution is 0.026, for $W_2$, MMD = 0.036, and for $W_3$, MMD = 1.348. We find that $W_1$ and $W_2$ can capture the joint distribution better than $W_3$, and $W_1$ is slightly better than $W_2$. Finally, we use squared bandwidth $W_1$ in our experiments. 
 
 In general, we conclude that the kernel bandwidths should cover a wide range, which helps MD3QN to precisely capture the true joint distribution. 

\begin{figure}[]
  \centering
  \includegraphics[width=0.9\textwidth]{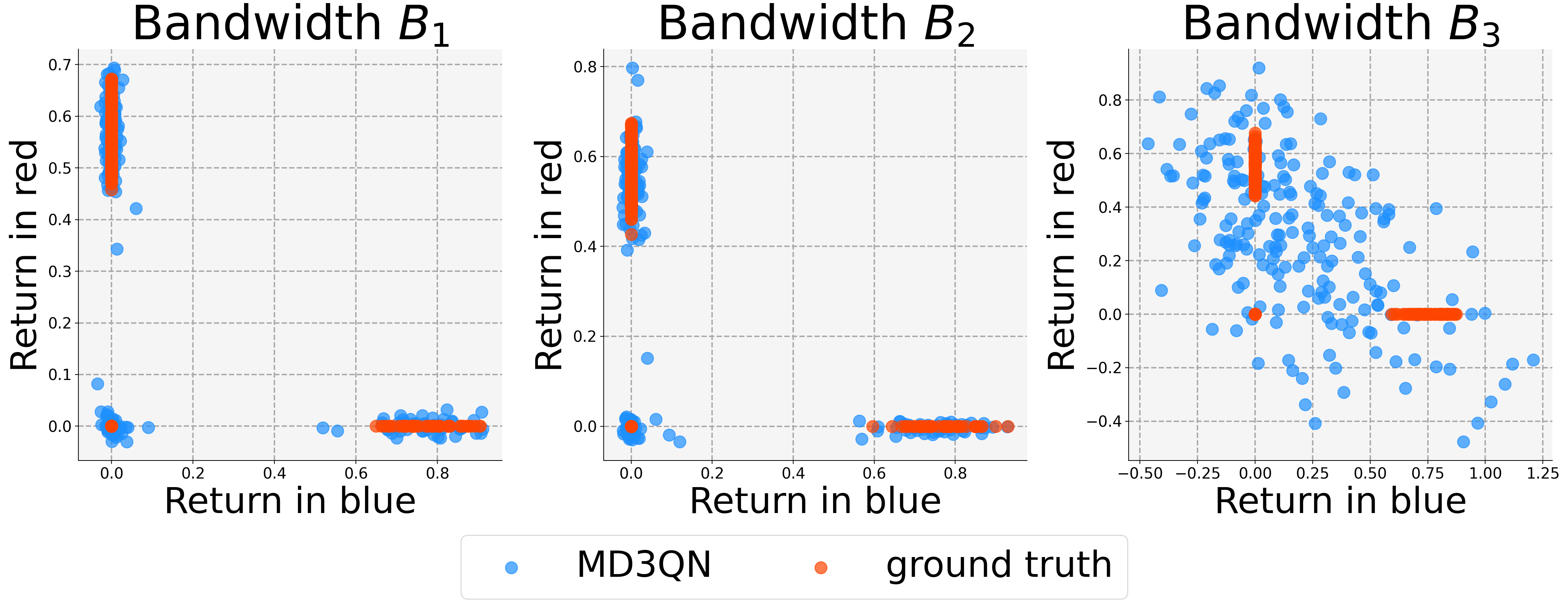}
  \caption{ Results for joint return distribution modeling under three sets of bandwidths. The experiment settings are the same as in Section 5.1. }
  \label{fig-ablation-bandwidth}
\end{figure}

\end{document}